\theoremstyle{plain} 
\newtheorem{theorem}{Theorem}[section]
\newtheorem{lemma}[theorem]{Lemma}
\newtheorem{corollary}[theorem]{Corollary}
\theoremstyle{definition}
\theoremstyle{remark}
\newtheorem*{notation*}{Notation}
\renewcommand{\algocf@captiontext}[2]{#1\algocf@typo. \AlCapFnt{}#2} 
\def\@algocf@capt@plain{top}
\renewcommand{\algocf@makecaption}[2]{%
  \addtolength{\hsize}{\algomargin}%
  \sbox\@tempboxa{\algocf@captiontext{#1}{#2}}%
  \ifdim\wd\@tempboxa >\hsize
    \hskip .5\algomargin%
    \parbox[t]{\hsize}{\algocf@captiontext{#1}{#2}}
  \else%
    \global\@minipagefalse%
    \hbox to\hsize{\box\@tempboxa}
  \fi%
  \addtolength{\hsize}{-\algomargin}%
}
\definecolor{mblue}{rgb}{0,0.4470,0.7410}
\definecolor{morange}{rgb}{0.8500,0.3250,0.0980}
\definecolor{myellow}{rgb}{0.9290,0.6940,0.1250}
\newcommand{\cal}{\mathcal}
\newcommand{\Sgrp}{{\cal S}}
\newcommand{\eps}{\varepsilon}
\newcommand{\reals}{\mathbb{R}}
\newcommand{\prob}{\operatorname{P}}
\newcommand{\expec}{\operatorname{E}}
\newcommand{\rank}{\operatorname{rank}}
\newcommand{\argmax}{\operatornamewithlimits{argmax}}
\renewcommand{\le}{\leqslant}
\renewcommand{\geq}{\geqslant}
\renewcommand{\leq}{\leqslant}
\newcommand{\Move}{\mathrm{M}}
\newcommand{\diff}{\mathrm{d}}
\newcommand{\FGM}{\textsc{fgm}}
\newcommand{\1}{\mathds{1}}
\newcommand{\indic}{\1}
\newcommand{\LI}{\operatorname{LI}}
\newcounter{algocounter}
\newenvironment{algo}{
\refstepcounter{algocounter}
\begin{itemize}
\item[]
  \textsc{------------------------------ Algorithm \thealgocounter \\}
}{\mbox{}\hfill$\Box$ \end{itemize}}
\begin{document}

\title{Bayesian inference for bivariate ranks}

\author{Simon Guillotte \and Fran\c{c}ois Perron \and Johan Segers}

\address{D\'epartement de math\'ematiques, Universit\'e du Qu\'ebec \`a Montr\'eal, 201, avenue Pr\'esident-Kennedy, Montr\'eal (Qu\'ebec) H3X 2Y7, Canada}
\email{guillotte.simon@uqam.ca}

\address{D\'epartement de math\'emathiques et de statistique, Universit\'e de Montr\'eal, Pavillon Andr\'e-Aisenstadt 2920, chemin de la Tour, Montr\'eal (Qu\'ebec) H3T 1J4, Canada}
\email{perronf@dms.umontreal.ca}

\address{Institut de statistique, biostatistique et sciences actuarielles (ISBA), Universit\'e catholique de Louvain, Voie du Roman Pays 20, B-1348 Louvain-la-Neuve, Belgium}
\email{johan.segers@uclouvain.be}
\date{\today}

\begin{abstract}
A recommender system based on ranks is proposed, where an expert's ranking of a set of objects and a user's ranking of a subset of those objects are combined to make a prediction of the user's ranking of all objects. The rankings are assumed to be induced by latent continuous variables corresponding to the grades assigned by the expert and the user to the objects. The dependence between the expert and user grades is modelled by a copula in some parametric family. Given a prior distribution on the copula parameter, the user's complete ranking is predicted by the mode of the posterior predictive distribution of the user's complete ranking conditional on the expert's complete and the user's incomplete rankings. Various Markov chain Monte-Carlo algorithms are proposed to approximate the predictive distribution or only its mode. The predictive distribution can be obtained exactly for the Farlie--Gumbel--Morgenstern copula family, providing a benchmark for the approximation accuracy of the algorithms. The method is applied to the MovieLens 100k dataset with a Gaussian copula modelling dependence between the expert's and user's grades.

\emph{Key words.} Bayes; Compatible ranking; Copula; Incomplete ranking; Markov chain Monte Carlo; Predictive distribution; Rank likelihood; Recommender systems; Simulated annealing.
\end{abstract}

\maketitle

\section{Introduction}

Recommender systems are part of many online businesses such as Amazon, e-Bay, Netflix, and others. They are tools to learn the interests of customers in order to make customer-specific recommendations of other products. These systems provide successful and valuable marketing strategies, especially due to the expansion of the world wide web and e-commerce. In 2006, Netflix organized the \textit{Netflix-Prize}, awarding one million dollars for the best algorithm. The prize was won in 2009 by a team of researchers called \textit{Bellkor’s Pragmatic Chaos} (AT\&T Labs) after over three years of competition. The problem has attracted attention in the statistical community, with statisticians working on similar problems, see for instance \cite*{feuerverger12}, \cite*{FV86}, \cite*{Lebanon12}, \cite*{Zhu14}, and the references therein.

We consider a version of recommender systems where an expert opinion ranking is available and is used, together with  a partial ranking by a costumer, in order to predict that customer's complete ranking. Essentially, we want to predict an individual's ranking of a set of $n\geq 1$ different objects, given an expert opinion ranking of the same objects. More precisely, a set of objects indexed by ${\cal N}=\{1,\ldots,n\}$ is to be evaluated and ranked by both, an expert and an individual. The expert ranks all the objects, while the individual ranks only the subset of objects corresponding to the indices in the set ${\cal M} = \{i_1, \ldots, i_m\} \subset {\cal N}$. This can happen for instance if the individual does not have knowledge yet of the objects with indices in ${\cal N} \setminus {\cal M}$.

Assume ties are impossible. Let $\Sgrp_k$ be the permutation group on the set $\{1, \ldots, k\}$. The experiment provides a complete expert's ranking $r_x=(r_x(1),\ldots,r_x(n)) \in \Sgrp_n$ as well as an incomplete user's ranking $r_y^*=(r_y^*(1),\ldots,r_y^*(m)) \in \Sgrp_m$, where $r_y^*(j)$ is the user's rank of object $i_j$ among the $m$ objects $i_1,\ldots,i_m$. The choice for the subscripts $x$ and $y$ is clarified by the following. We think of the ranks as being induced by ratings or grades measured on a continuous scale: if $x_1,\ldots,x_n$ denote the expert's grades and if $y_{i_1},\ldots,y_{i_m}$ denote the individual's grades, then $r_x = \rank(x_1,\ldots,x_n)$ and $r_y^* = \rank(y_{i_1},\ldots,y_{i_m})$. 

If the user had been able to grade all objects, the user's grades would have been $y_1,\ldots,y_n$, with corresponding ranking $r_y=\rank(y_1,\ldots,y_n)$. In view of this, the model is constructed by assuming an underlying set of latent pairs of grades $(x_1,y_1), \ldots ,(x_n,y_n)$ of all $n$ objects attributed by both the expert and the individual. Concretely, we let $(x_i,y_i)$, $i=1,\ldots,n$, be realizations of independent random vectors $(X_i,Y_i)$, $i=1,\ldots,n$, each of which is distributed according to the same bivariate distribution with continuous margins. The continuity assumption makes the marginal distributions of the grades irrelevant to the rankings. We assume that the copula of the joint distribution belongs to some parametric family indexed by a parameter $\theta \in \Theta$ for which we select a prior. Let $R_X = \rank(X_1,\ldots,X_n)$ and $R_Y = \rank(Y_1,\ldots,Y_n)$ denote the random expert and individual rankings, respectively, of the objects in ${\cal N}$, and let $R_Y^* = \rank(Y_{i_1}, \ldots, Y_{i_m})$ denote the random user ranking of the objects in ${\cal M}$. Note that $R_X$ and $R_Y$ are random elements in $\Sgrp_n$, while $R_Y^*$ is a random element in $\Sgrp_m$. The prediction of the user's complete ranking of all $n$ objects is based on the mode of the posterior predictive distribution:
\begin{equation}
\label{eq:r2hat}
  \hat{r}_y = \argmax_{r_y} \prob(R_Y = r_y \mid R_X = r_x, R_Y^* = r_y^*).
\end{equation}
This predicted ranking is then used for instance to recommend new products to the customer.

One difficulty here is the evaluation of the joint probability mass function of the pair of rankings $(R_X, R_Y)$: given a parameter value $\theta \in \Theta$, we need to compute
\begin{equation}
\label{eq:rl}
  \prob_\theta(R_X = r_x, \, R_Y = r_y), \qquad r_x, r_y \in \Sgrp_n.
\end{equation}
In most cases, this probability is not analytically tractable. In the literature, it has been referred to as 
the rank likelihood; see for instance \cite{Hoff07}, \cite*{Hoff14} and \cite*{Segers14}. The 
continuity assumption on the marginal cumulative distribution functions makes the margins irrelevant to the evaluation of the probability in \eqref{eq:rl}. The assumption of uniform margins implies that the 
probability~\eqref{eq:rl} can be evaluated by means of an integral over $[0,1]^{2n}$. Within a Bayesian approach, it also means that we need only put a prior on the copula parameter. 

An objective of this work is to find a family of copulas for which a closed-form expression of the posterior predictive distribution in~\eqref{eq:rl} is available. We will show that the Farlie--Gumbel--Morgenstern (\FGM) family \citep[p.~77]{Nelsen06} satisfies this requirement.

Since the range of dependence that can be modelled by the \FGM\ family is rather restricted, it is natural to ask how to proceed for other parametric copula families, when no explicit formulas for \eqref{eq:rl} exist. We develop a stochastic algorithm to compute \eqref{eq:rl} and we assess its accuracy by comparing its output to the results obtained from the exact formulas available for the \FGM\ family.

Another problem for computing the prediction in~\eqref{eq:r2hat} is that the cardinality of the set of rankings $r_y \in \Sgrp_n$ that are compatible with the observed ranking $r_y^* \in \Sgrp_m$ is equal to $n!/m!$. This number will usually be so high that it is infeasible to find the maximum in \eqref{eq:r2hat} by computing the probabilities on the right-hand side on \eqref{eq:r2hat} for all possible $r_y$. We will instead propose a solution based on an ergodic Monte Carlo Markov chain with the correct limiting distribution. The algorithm is applied to predict user rankings in the \textit{MovieLens 100k} dataset with a Gaussian copula modelling dependence between expert and user grades.

\section{Rank likelihood}
\label{S:RL}

Let $\Sgrp_n$ be the permutation group of the set ${\cal N} = \{1,\ldots,n\}$. A permutation $\sigma \in \Sgrp_n$ is a bijection from ${\cal N}$ to itself; notation $\sigma = (\sigma(1),\ldots,\sigma(n))$. The group operation, denoted by $\circ$, is the usual composition of functions, that is, $\sigma\circ\tau(i)=\sigma(\tau(i))$ for $\sigma$ and $\tau$ in $\Sgrp_n$ and $i \in {\cal N}$. The group's identity element is the identity map, $e=(1,\ldots,n)$. The inverse of a permutation $\sigma\in \Sgrp_n$ is denoted by $\sigma^{-1}$ and satisfies $\sigma \circ \sigma^{-1} = e = \sigma^{-1} \circ \sigma$. 

Let $\mathbb{D}_n = \{ x \in \reals^n \colon x_{(1)} < \cdots < x_{(n)} \}$ be the set of vectors in $\reals^n$ having no ties. The rank vector or ranking $\rank(x) = r_x = (r_x(1), \ldots, r_x(n))$ associated to $x \in \mathbb{D}_n$ is defined by
\[
  r_x(i) 
  =
  \sum_{j \in {\cal N}} \1(x_i \le x_j), \qquad i \in {\cal N}.
\]
We have $r_x \in \Sgrp_n$ for all $x \in \mathbb{D}_n$. We also define $r_x = \rank(x) = e$ if $x \in \reals^n \setminus \mathbb{D}_n$, ensuring that the map $\rank : \reals^n \to \Sgrp_n$ is well-defined. A simple but useful property is that the rank map behaves well under composition with permutations: for $x \in \mathbb{D}_n$ and $\sigma \in \Sgrp_n$, we have
\begin{equation}
\label{eq:rankComp}
  \rank( x_{\sigma(1)}, \ldots, x_{\sigma(n)} )
  =
  \rank( x_1, \ldots, x_n ) \circ \sigma. 
\end{equation}

Given two grading vectors $x, y \in \mathbb{D}_n$, we want to investigate the alignment, or the lack thereof, of the associated rankings $r_x = \rank(x)$ and $r_y = \rank(y)$. We would like to know the rank, under $y$, of the object that was attributed rank $j \in {\cal N}$ under the grading $x$. The original index of this object is equal to $i = r_x^{-1}(j)$, and its rank under $y$ is equal to $r_y(i) = r_y(r_x^{-1}(j))$. This leads us to the study of the permutation 
\begin{equation} 
\label{eq:ryrxinv}
  s = r_y \circ r_x^{-1} \in {\cal S}_n. 
\end{equation}
If $s = e$, for instance, the gradings $x$ and $y$ are perfectly aligned and induce the same ranking of the $n$ objects.

Recall from the introduction that the random pairs $(X_i,Y_i)$, 
$i=1,\ldots,n$, represent the expert's together with the individual's gradings of $n$ objects. Assume that $(X_i,Y_i)$, 
$i=1,\ldots,n$, are independent and identically distributed (iid) random pairs with continuous margins. With probability one, there are no ties among the gradings. Consider the random rank vectors
\begin{align*}
  R_X &= \rank(X_1, \ldots, X_n), &
  R_Y &= \rank(Y_1, \ldots, Y_n).
\end{align*}
As in \eqref{eq:ryrxinv}, we want to express the ranking induced by the random grading vector $Y = (Y_1, \ldots, Y_n)$ in terms of the one induced by the random grading vector $X = (X_1, \ldots, X_n)$. This motivates the definition of the rank statistic
\[
  S(X, Y) = R_Y \circ R_X^{-1}.
\]
The joint distribution of $(R_X, R_Y)$ is determined by the distribution of $S(X, Y)$.

\begin{lemma}
\label{lem:S}
If $(X_i, Y_i)$, $i = 1, \ldots, n$, are iid random pairs with continuous margins, then
\[
  \prob( R_X = r_x, \, R_Y = r_y )
  =
  \frac{1}{n!} \prob\{ S(X, Y) = r_y \circ r_x^{-1} \},
  \qquad r_x, r_y \in \Sgrp_n.
\]
\end{lemma}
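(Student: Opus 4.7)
The strategy is to exploit the exchangeability of the iid pairs $(X_i, Y_i)$ under simultaneous permutation of coordinates, and then to use the composition identity \eqref{eq:rankComp} to convert that symmetry into a symmetry on the rank side. The heuristic is simply that $S = R_Y \circ R_X^{-1}$ is invariant under relabelling of the objects, while the pair $(R_X, R_Y)$ is not; the equation must therefore be obtained by averaging $(R_X, R_Y)$ over the $n!$ relabellings that yield the same $S$.

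More concretely, for any fixed $\sigma \in \Sgrp_n$, the iid assumption gives
\[
  (X_{\sigma(1)}, Y_{\sigma(1)}), \ldots, (X_{\sigma(n)}, Y_{\sigma(n)})
  \stackrel{d}{=}
  (X_1, Y_1), \ldots, (X_n, Y_n).
\]
Taking ranks and applying \eqref{eq:rankComp} to both coordinates (the continuity of the margins ensures the vectors lie in $\mathbb{D}_n$ almost surely, so the exceptional definition of $\rank$ plays no role), I would obtain
\[
  \prob(R_X = r_x, R_Y = r_y)
  =
  \prob(R_X \circ \sigma = r_x, \, R_Y \circ \sigma = r_y)
  =
  \prob(R_X = r_x \circ \sigma^{-1}, \, R_Y = r_y \circ \sigma^{-1}).
\]
Since $(r_y \circ \sigma^{-1}) \circ (r_x \circ \sigma^{-1})^{-1} = r_y \circ r_x^{-1}$, the probability on the left depends on $(r_x, r_y)$ only through $s := r_y \circ r_x^{-1}$.

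To finish, I would observe that the map $(r_x, r_y) \mapsto r_y \circ r_x^{-1}$ from $\Sgrp_n \times \Sgrp_n$ onto $\Sgrp_n$ has fibres of size exactly $n!$: for each fixed $s$, the solutions are $\{(r, s \circ r) : r \in \Sgrp_n\}$. Summing the equal-probability events over this fibre gives
\[
  \prob\{S(X, Y) = s\}
  =
  \sum_{r \in \Sgrp_n} \prob(R_X = r, R_Y = s \circ r)
  =
  n! \, \prob(R_X = r_x, R_Y = r_y),
\]
which is the claimed identity. The only subtle point is the bookkeeping with compositions in the invariance step; once the exchangeability is correctly translated through \eqref{eq:rankComp}, the counting argument is immediate.
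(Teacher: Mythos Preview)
Your proof is correct and follows essentially the same approach as the paper: both exploit exchangeability of the iid pairs under simultaneous permutation, push this through \eqref{eq:rankComp} to obtain an invariance of $\prob(R_X=r_x,R_Y=r_y)$ along the orbit $\{(r, s\circ r):r\in\Sgrp_n\}$, and then sum over that orbit of size $n!$. The paper's version makes the specific substitution $\sigma=\tau^{-1}\circ r_x$ to write $\prob(R_X=r_x,R_Y=r_y)=\prob\{R_X=\tau,\,S(X,Y)=r_y\circ r_x^{-1}\}$ before summing over $\tau$, but this is only a cosmetic repackaging of your constancy-then-sum argument.
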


The proof of Lemma~\ref{lem:S} and of the other results in this paper are deferred to the Appendix. Let $H$ and $F, G$ be the joint and marginal cumulative distribution functions, respectively, of the random pairs $(X_i, Y_i)$, i.e.,
\begin{align*}
  H(x, y) &= \prob(X_1 \le x, Y_1 \le y), &
  F(x) &= H(x, \infty), &
  G(y) &= H(\infty, y),
\end{align*}
for $x, y \in \reals$. By assumption, $F$ and $G$ are continuous. Let $U_i = F(X_i)$ and $V_i = G(Y_i)$ for $i = 1, \ldots, n$. The random variables $U_i$ and $V_i$ are uniformly distributed on $(0, 1)$ and their joint cumulative distribution function is a copula,
\[
  C(u, v) = \prob( U_1 \le u, \, V_1 \le v ), \qquad (u, v) \in [0, 1]^2.
\]
Sklar's Theorem says that $H$ admits the representation
\begin{equation}
\label{eq:H}
  H(x,y) = C(F(x), G(y)), \qquad (x,y) \in \reals^2.
\end{equation}
With probability one, the rankings induced by the random vectors $X = (X_1, \ldots, X_n)$ and $Y = (Y_1, \ldots, Y_n)$ are the same as the ones induced by the random vectors $U = (U_1, \ldots, U_n)$ and $(V_1, \ldots, V_n)$, respectively. Since the pairs $(U_i, V_i)$, $i = 1, \ldots, n$, are iid with cumulative distribution function given by the copula $C$, it follows that the joint distribution of the rank vectors $(R_X, R_Y)$ is determined by $C$. This is formalized by the next theorem. In view of Lemma~\ref{lem:S}, it suffices to study the distribution of $S(X, Y)$.

\begin{theorem}
\label{thm:RL}
Let $(X_i,Y_i)$, $i=1,\ldots,n$, be iid random vectors with continuous margins and copula $C$. If the copula $C$ has density $c$, then we have
\begin{equation}
 \label{eq:RL}
 \prob\{S(X,Y)=s\}=\frac{1}{n!}\expec\left\{ \prod_{i=1}^n c\Bigl({\textstyle \sum_{j=1}^i
W_{1,j},\sum_{j=1}^{s(i)} 
W_{2,j}}\Bigr)\right\},\qquad s\in \Sgrp_n,
\end{equation}
where $(W_{\ell,1},\ldots,W_{\ell,n+1})=(W_{\ell,1},\ldots,W_{\ell,n},1-\sum_{j=1}^nW_{\ell,j})$, $\ell=1,2$, are iid according to the $\operatorname{Dirichlet}(1,\ldots,1)$ distribution. 
\end{theorem}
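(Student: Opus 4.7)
The plan is to reduce to the copula-level uniform pairs $(U_i, V_i) = (F(X_i), G(Y_i))$, which by the continuity of the margins induce the same rankings as $(X_i, Y_i)$ almost surely and are iid with joint density $c$ on $[0,1]^2$. Combining this with Lemma~\ref{lem:S} applied to $r_x = e$, $r_y = s$, the identity~\eqref{eq:RL} reduces to
\[
  \prob\{R_U = e,\, R_V = s\} = \frac{1}{(n!)^2} \, \expec\left\{ \prod_{i=1}^n c\Bigl(\textstyle\sum_{j=1}^i W_{1,j},\, \sum_{j=1}^{s(i)} W_{2,j}\Bigr)\right\}.
\]
So the task is to produce this identity.

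Next, I would write the left-hand side as an integral of the joint density $\prod_{i=1}^n c(u_i, v_i)$ over the region $A \times B_s \subset [0,1]^{2n}$, where $A = \{u : u_1 < \cdots < u_n\}$ encodes $R_U = e$ and $B_s = \{v : \rank(v) = s\}$ encodes $R_V = s$. On $B_s$ the coordinates are identified with the order statistics via $v_i = v_{(s(i))}$, where $v_{(1)} < \cdots < v_{(n)}$.

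The main computational step is a change of variables to the simplex. On $A$, set $u_i = \sum_{j=1}^i w_{1,j}$ with residual $w_{1,n+1} = 1 - \sum_{j=1}^n w_{1,j}$ forcing $u_n < 1$; the map is triangular with unit Jacobian, so $du_1 \cdots du_n = dw_{1,1} \cdots dw_{1,n}$ on the open $n$-simplex. On $B_s$, apply the same parameterization to the order statistics, $v_{(k)} = \sum_{j=1}^{k} w_{2,j}$, so that $v_i = \sum_{j=1}^{s(i)} w_{2,j}$; because the map $(v_1,\ldots,v_n) \mapsto (v_{(1)},\ldots,v_{(n)})$ is just a permutation of coordinates on $B_s$, Lebesgue measure is preserved and again $dv_1 \cdots dv_n = dw_{2,1} \cdots dw_{2,n}$ on the simplex. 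Since the Dirichlet$(1,\ldots,1)$ density on the $n$-simplex is the constant $n!$, each of the two simplex integrals equals $1/n!$ times the corresponding Dirichlet expectation, producing the required $1/(n!)^2$ factor.

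The most delicate point will be justifying the change of variables on $B_s$: one must verify that permuting the components of $v$ according to $s^{-1}$ maps $B_s$ bijectively and measure-preservingly onto $A$, so that the $\prod c$ factors reassemble correctly after relabelling into $c(\sum_j^i W_{1,j}, \sum_j^{s(i)} W_{2,j})$. Apart from this bookkeeping with the permutation $s$, everything else is routine simplex calculus together with the Dirichlet normalisation.
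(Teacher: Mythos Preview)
Your proposal is correct and follows essentially the same route as the paper: reduce to the uniform pairs via the probability integral transform, apply Lemma~\ref{lem:S} with $r_x=e$, write $\prob(R_U=e,\,R_V=s)$ as an integral over the ordered region, and pass to simplex coordinates to recognise the Dirichlet$(1,\ldots,1)$ expectation. The only cosmetic difference is that the paper inserts an intermediate symmetrisation step---rewriting the integral over $E_s$ as $\frac{1}{(n!)^2}$ times an integral over all of $(0,1)^{2n}$ of $\prod_i c(\tilde U_{(i)},\tilde V_{(s(i))})$ and then invoking the spacing representation of uniform order statistics---whereas you change variables directly from $A\times B_s$ to $\Delta^2$; these are two presentations of the same Jacobian-one map.
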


Among other things, Theorem~\ref{thm:RL} shows the intuitively obvious property that the marginal distributions of the gradings do not affect the joint distribution of the rank vectors. We shall therefore assume that the marginal distributions of the expert's and user's grades are both uniform on $(0,1)$. Then we have $X_i = U_i$ and $Y_i = V_i$, and the random pairs $(X_i, Y_i)$, $i = 1, \ldots, n$, are independent and identically distributed according to the copula $C$. This assumption also allows us to unambiguously write $S = S(X, Y) = S(U, V)$.

When the copula function belongs to a parametric family $(C_{\theta} : \theta \in \Theta)$, the distribution of $S$ depends on $\theta$. The probability mass function of $S$, seen as a function of $\theta$, i.e., the map $\theta \mapsto \prob_\theta(S = s)$, for $s \in \Sgrp_n$, is sometimes referred to as the rank likelihood. The expression~\eqref{eq:RL} is particularly helpful as it suggests a certain Monte Carlo algorithm to compute this rank likelihood; see Algorithm~\ref{S:Algo1}.

\section{Predictive distribution of compatible rankings}
\label{S:CR}

\subsection{Compatible rankings}

In the predictive distribution \eqref{eq:r2hat}, any candidate complete ranking $r_y \in \Sgrp_n$ by the user should be compatible with the observed ranking $r_y^* \in \Sgrp_m$ of the $m$ objects graded by the user. The notion of compatible rankings has already appeared in the literature, see for instance~\cite{Alvo14}. Recall that $\Sgrp_n$ is the group of permutations of the set ${\cal N}=\{1,\ldots,n\}$. 

An incomplete ranking of size $m$, with $m \in \{1, \ldots, n-1\}$, is a couple $(r^*, {\cal M})$ consisting of a permutation $r^*\in \Sgrp_m$ and a subset ${\cal M}=\{i_1,\ldots,i_m\} \subset {\cal N}$, with $1\leq i_1 < \cdots < i_m \leq n$. For example, for $n=4$, an incomplete ranking of size $m=3$ is given by $r^*=(3,1,2)$ and ${\cal M} = \{1,3,4\}$. This incomplete ranking corresponds to the partial ranking $r=(3,-,1,2)$, the second object not (yet) being ranked among the three other ones.

The set $\mathcal{C}(r^*, {\cal M})$ of compatible rankings associated to the incomplete ranking $(r^*, {\cal M})$ is defined as the set of rankings $r \in \Sgrp_n$ of all $n$ objects such that the ranking of the $m$ objects in ${\cal M}$ induced by $r$ is equal to $r^*$. Formally, we have
\begin{eqnarray*}
  \mathcal{C}(r^*,{\cal M})
  &=& \{ r \in \Sgrp_n : \, \rank(r(i_1),\ldots,r(i_m)) = r^*\}.
  \\
  &=& \{ r \in \Sgrp_n : \, r(i_{\sigma(1)}) < r(i_{\sigma(2)}) < \cdots < r(i_{\sigma(m)}), \, \sigma = (r^*)^{-1} \}.
\end{eqnarray*}
For the example above, we have
\[
 \mathcal{C}(r^*, {\cal M})=\{(3,4,1,2),(4,3,1,2),(4,2,1,3),(4,1,2,3)\}.
\]

To select an $(r^*, {\cal M})$-compatible ranking $r$, it suffices to choose the ranks, $r(i) \in {\cal N}$, of the $n-m$ objects $i \in {\cal N} \setminus {\cal M}$. The ranks of the remaining $m$ objects in ${\cal M}$ are then determined by the compatibility constraint. This shows that the cardinality of $\mathcal{C}(r^*, {\cal M})$ is equal to $n!/m!$.  

In the original formulation of the problem, the permutations $r_x \in \Sgrp_n$ and $r_y \in \Sgrp_n$ represent the expert and the individual's complete rankings respectively. The set ${\cal M} = \{ i_1, \ldots, i_m \} \subset {\cal N}$ represents the indices of the $m$ objects ranked by the individual, with $1\leq i_1<\cdots <i_m\leq n$ and $1 \le m < n$. One observes the expert's complete ranking $r_x$ and the user's ranking $r_y^*=(r_y(i_1),\ldots,r_y(i_m)) \in \Sgrp_m$ of the objects in ${\cal M}$. Notice that if the expert would have ranked only the $m$ objects in ${\cal M}$, then the expert's ranks would have been
\[
  r_x^* = \rank( r_x(i_1), \ldots, r_x(i_m) ) \in \Sgrp_m.
\]

Further, consider the permutation
\[
  s^* = r_y^* \circ (r_x^*)^{-1} \in \Sgrp_m.
\]
In words, $s^*(j)$ is the user's rank of the object that was given rank $j = 1, \ldots, m$ by the expert, among the $m$ objects graded by the user. If $s^* = e$, the identity permutation in $\Sgrp_m$, then the rankings by the user and the expert are perfectly aligned. 

By using Lemma~\ref{lem:S}, it will be shown in~\eqref{eq:preddist} below that for the calculation of the posterior predictive distribution~\eqref{eq:r2hat}, we can simply work with the transformation $s^*$ as our observed data instead of with $r_x$, ${\cal M}$, and $r_y^*$. However, we need to translate the compatibility constraint to the transformed rankings. This is the purpose of Lemma~\ref{lem:cmpblt} below, which says that $r_y$ is compatible with $r_y^*$ for the objects in ${\cal M}$ if and only if $r_y \circ r_x^{-1}$ is compatible with $s^*$ for the objects in the set ${\cal M}^*$ defined in \eqref{eq:Mstar}. This statement can be further interpreted as if the $n$ objects were lined up in the order of the expert's preference (and so $r_x=e \in \Sgrp_n$) and the user was to rank the $m$ objects presented to him in that order: the result would be $s^*$ and ${\cal M}^*$.
 
The order statistics of the expert ranks $r_x(i_1), \ldots, r_x(i_m) \in {\cal N}$ of the $m$ objects graded by the user are denoted by 
$  1 \le i_1^* < \cdots < i_m^* \le n$. 
For the reason explained above, it will be convenient to consider the incomplete ranking $(s^*, {\cal M}^*)$ with
\begin{equation}
\label{eq:Mstar}
  {\cal M}^* 
  = \{ i_1^*, \ldots, i_m^* \} 
  = \{ r_x(i_1), \ldots, r_x(i_m) \} 
  \subset {\cal N}.
\end{equation}

To apply Lemma~\ref{lem:S}, we would like to switch from $r_y$ to $r_y \circ r_x^{-1}$. The following lemma says how this transformation affects the compatibility constraint.

\begin{lemma}
\label{lem:cmpblt}
We have
\[
  r_y \in {\cal C}( r_y^*, {\cal M} )
  \iff
  r_y \circ r_x^{-1} \in {\cal C}( s^*, {\cal M}^* ).
\]
\end{lemma}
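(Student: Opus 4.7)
The plan is to unwind both compatibility conditions through the definition of the rank map and reduce the equivalence to a single application of the rank-composition identity~\eqref{eq:rankComp}. The key insight is that passing from ${\cal M}$ to ${\cal M}^*$ is exactly the reshuffling of the indices $i_1, \ldots, i_m$ induced by the permutation $r_x^*$, while passing from $r_y$ to $r_y \circ r_x^{-1}$ undoes $r_x$; these two effects should cancel at the level of ranks.

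First I would set up a clean dictionary between the two index sets. Since $i_1^* < \cdots < i_m^*$ are the values $r_x(i_1), \ldots, r_x(i_m)$ listed in increasing order, and since $r_x^*(j)$ is by definition the rank of $r_x(i_j)$ in this list, we get the identity $r_x(i_j) = i^*_{r_x^*(j)}$, which inverts to
\[
  r_x^{-1}(i_k^*) = i_{(r_x^*)^{-1}(k)}, \qquad k = 1, \ldots, m.
\]

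Next I would evaluate $s = r_y \circ r_x^{-1}$ on ${\cal M}^*$: substituting the dictionary yields $s(i_k^*) = r_y\bigl(i_{(r_x^*)^{-1}(k)}\bigr)$. Applying~\eqref{eq:rankComp} (with $m$ in place of $n$) to the tuple $\bigl(r_y(i_1), \ldots, r_y(i_m)\bigr)$ with $\sigma = (r_x^*)^{-1}$ gives
\[
  \rank\bigl( s(i_1^*), \ldots, s(i_m^*) \bigr)
  = \rank\bigl( r_y(i_1), \ldots, r_y(i_m) \bigr) \circ (r_x^*)^{-1}.
\]
By the definition of compatibility, $r_y \circ r_x^{-1} \in {\cal C}(s^*, {\cal M}^*)$ iff the left-hand side equals $s^* = r_y^* \circ (r_x^*)^{-1}$. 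Postcomposing with $r_x^*$ on the right, this is equivalent to $\rank(r_y(i_1), \ldots, r_y(i_m)) = r_y^*$, which is precisely $r_y \in {\cal C}(r_y^*, {\cal M})$.

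The main obstacle is purely notational: one must keep careful track of which permutation acts on which index set and in which direction, and in particular verify the identity $r_x(i_j) = i^*_{r_x^*(j)}$. Once that identification is in hand, the argument collapses to one use of~\eqref{eq:rankComp} followed by a cancellation of $(r_x^*)^{-1}$.
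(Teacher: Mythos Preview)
Your proposal is correct and follows essentially the same route as the paper: both arguments hinge on the index identity $r_x^{-1}(i_k^*) = i_{(r_x^*)^{-1}(k)}$ (equivalently $r_x(i_j) = i^*_{r_x^*(j)}$) and a single application of the rank-composition rule~\eqref{eq:rankComp} with $\sigma = (r_x^*)^{-1}$, after which the factor $(r_x^*)^{-1}$ cancels. The only cosmetic difference is direction---the paper starts from $r_y \in {\cal C}(r_y^*,{\cal M})$ and works toward ${\cal C}(s^*,{\cal M}^*)$, whereas you start from the latter---but the steps are identical.
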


\subsection{Predictive distribution}

Let $(C_\theta : \theta \in \Theta)$ be parametric family of bivariate copulas and let $\pi(\theta)$, $\theta \in \Theta$, be a prior density on $\theta$. Conditionally on $\theta$, the random pairs $(X_i, Y_i)$, $i = 1, \ldots, n$, are iid with common distribution given by $C_\theta$. We observe the complete expert ranking $R_X = r_x \in \Sgrp_n$ as well as the partial user ranking $R_Y^* = r_y^* \in \Sgrp_m$ on ${\cal M} = \{i_1, \ldots, i_m\} \subset {\cal N}$ as above. The posterior predictive distribution, or predictive distribution in short, of the complete user ranking $R_Y$ given the data is
\begin{align*}
  \prob( R_Y = r_y \mid R_X = r_x, \, R_Y^* = r_y^* )
  &=
  \frac%
    {\int_{\Theta} \prob_\theta( R_X = r_x, \, R_Y^* = r_y^*, \, R_Y = r_y ) \, \pi(\theta) \, \diff \theta}%
    {\int_{\Theta} \prob_\theta( R_X = r_x, \, R_Y^* = r_y^* ) \, \pi(\theta) \, \diff \theta},
\end{align*}
for $r_y \in \Sgrp_n$. For the numerator, we use Lemma~\ref{lem:S} to find that
\begin{multline*}
  \prob_\theta( R_X = r_x, \, R_Y^* = r_y^*, \, R_Y = r_y ) \\
  =
  \begin{cases}
    \prob_\theta( R_X = r_x, \, R_Y = r_y ) = \frac{1}{n!} \prob_\theta( S = r_y \circ r_x^{-1} ),
    & \text{if $r_y \in {\cal C}( r_y^*, {\cal M} )$,} \\
    0
    & \text{otherwise}.
  \end{cases}
\end{multline*}
Summing over all $r_y \in \Sgrp_n$, we find for the denominator that
\[
  \prob_\theta( R_X = r_x, \, R_Y^* = r_y^* )
  =
  \sum_{r_y' \in {\cal C}( r_y^*, {\cal M} )} \frac{1}{n!} \prob_\theta( S = r_y' \circ r_x^{-1} ).
\]
The marginal distribution of $S$ (marginal with respect to $\theta$) is
\begin{equation}
\label{eq:marginal_probs}
  \prob( S = s ) = \expec_\pi \{ \prob_\theta( S = s ) \} = \int \prob_\theta( S = s ) \, \pi(\theta) \, \diff \theta,
  \qquad s \in \Sgrp_n.
\end{equation}
As a consequence, the predictive distribution of $R_Y$ given the data is
\[
  \prob( R_Y = r_y \mid R_X = r_x, \, R_Y^* = r_y^* )
  = 
  \1\{ r_y \in {\cal C}( r_y^*, {\cal M} ) \} \, 
  \frac%
    {\prob( S = r_y \circ r_x^{-1} )}%
    {\sum_{r_y' \in {\cal C}( r_y^*, {\cal M} )} \prob( S = r_y' \circ r_x^{-1} )},
\]
for $r_Y \in \Sgrp_n$. Write $r_y = s \circ r_x$ with $s = r_y \circ r_x^{-1}$ and note from Lemma~\ref{lem:cmpblt} that $r_y \in {\cal C}( r_y^*, {\cal M} )$ if and only if $s \in {\cal C}( s^*, {\cal M}^* )$. We find that the predictive distribution of $R_Y$ given the data is
\begin{align}
\nonumber
  \prob( R_Y = s \circ r_x \mid R_X = r_x, \, R_Y^* = r_y^* )
  &=
  \1\{ s \in {\cal C}( s^*, {\cal M}^* ) \} \,
  \frac%
    {\prob( S = s )}%
    {\sum_{s' \in {\cal C}( s^*, {\cal M}^* )} \prob( S = s' )} \\
  &=
  \prob\{ S = s \mid S \in {\cal C}( s^*, {\cal M}^* ) \}
  =: p(s),
\label{eq:preddist}
\end{align}
for $s \in \Sgrp_n$. To ease the notation and terminology, we call $p(s)$, $s \in {\cal C}( s^*, {\cal M}^* )$, the predictive distribution of the compatible rankings. Since $p(s) = 0$ for $s \not\in {\cal C}( s^*, {\cal M}^* )$, we do not need to consider such permutations.

The predicted ranking, $\hat{r}_y$, for the user is equal to the mode of the predictive distribution. In view of the above identities, we have
\[
  \hat{r}_y 
  = \argmax_{r_y} \prob( R_Y = r_y \mid R_X = r_x, \, R_Y^* = r_y^* )
  = \hat{s} \circ r_x
\]
where
\begin{equation}
\label{eq:shat}
  \hat{s} 
  = \argmax_{s \in {\cal C}( s^*, {\cal M}^* )} p(s)
  = \argmax_{s \in {\cal C}( s^*, {\cal M}^* )} \prob( S = s ).
\end{equation}

Two questions arise: How to compute the marginal and predictive distributions $\Pr( S = s )$ and $p(s)$, respectively? How to find the mode, $\hat{s}$, of the predictive distribution? For the family of Farlie--Gumbel--Morgenstern (\FGM) copulas, we can find explicit formulas for the marginal probabilities $\Pr( S = s )$. For other copula families, we propose Monte Carlo algorithms, the performance of which we assess by using the explicit formulas for the \FGM\ family as benchmark.

\section{Farlie--Gumbel--Morgenstern copula family}
\label{S:FGM}


\subsection{Rank likelihood}

The copulas in the \FGM\ family have the form $C_{\theta}(u,v) = uv \{1 + \theta(1-u)(1-v)\}$, with densities $c_{\theta}(u,v) = 1 + \theta(1-2u)(1-2v)$, for $(u,v) \in [0,1]^2$ and with parameter $\theta \in \Theta = [-1, 1]$. The fact that the density is polynomial allows us to evaluate the rank likelihood $\prob_\theta( S = s ) $ in \eqref{eq:RL} explicitly. The resulting expression is a polynomial of degree $n-1$ in $\theta$.

\begin{theorem}
\label{thm:FGM}
Let $(X_i, Y_i)$, $i = 1, \ldots, n$, be iid random vectors with continuous margins and \FGM\ copula $C_\theta$, $\theta \in [-1, 1]$. Then
\begin{equation}
 \label{eq:FGM}
 \prob_{\theta}(S=s)
 =
 \sum_{j=0}^{n-1} c_j(s) \, \theta^j,
 \qquad s \in \Sgrp_n,
\end{equation}
with $c_0(s)=1/n!$, and 
\begin{equation}
\label{eq:cj}
  c_j(s)
  = 
  n! \sum_{1\leq i_1<i_2<\cdots<i_j\leq n}
  d_j(i_1,\ldots,i_j) \, d_j(s(i_1),\ldots,s(i_j)), \quad j=1,\ldots,n-1,
\end{equation}
where
\begin{equation}
\label{eq:dj}
  d_j(i_1,\ldots,i_j)
  =
  \frac{1}{(n+j)!}
  \sum_{k_1=1}^{n+1} \cdots \sum_{k_j=1}^{n+1}(-1)^{\sum_{\ell=1}^j\indic(k_{\ell}>i_{\ell})}
  \prod_{p=1}^{n+1}\left\{\sum_{\ell=1}^j\indic(k_{\ell}=p)\right\}!.
\end{equation}
\end{theorem}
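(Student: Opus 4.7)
The plan is to substitute the FGM density $c_\theta(u,v) = 1 + \theta(1-2u)(1-2v)$ into the rank likelihood formula of Theorem~\ref{thm:RL} and expand the resulting product over subsets. Writing $\tilde{U}_i = \sum_{j=1}^{i} W_{1,j}$ and $\tilde{V}_i = \sum_{j=1}^{i} W_{2,j}$ for $i = 1, \ldots, n$, so that $(\tilde U_1,\ldots,\tilde U_n)$ and $(\tilde V_1,\ldots,\tilde V_n)$ are two independent vectors of uniform order statistics, the integrand in \eqref{eq:RL} becomes
\[
  \prod_{i=1}^{n}\bigl\{1 + \theta (1-2\tilde{U}_i)(1-2\tilde{V}_{s(i)})\bigr\}
  =
  \sum_{T \subseteq \{1,\ldots,n\}} \theta^{|T|} \prod_{i \in T}(1-2\tilde{U}_i)(1-2\tilde{V}_{s(i)}).
\]
Taking expectation term by term and using the independence of the two Dirichlet vectors, the contribution of a subset $T = \{i_1 < \cdots < i_j\}$ factorises, and the coefficient of $\theta^j$ in $\prob_\theta(S=s)$ collapses to
\[
  c_j(s) = \frac{1}{n!}\sum_{1 \le i_1 < \cdots < i_j \le n} \mu_j(i_1,\ldots,i_j)\, \mu_j(s(i_1),\ldots,s(i_j)),
\]
where $\mu_j(a_1,\ldots,a_j) = \expec\{\prod_{\ell=1}^{j}(1-2\tilde{U}_{a_\ell})\}$. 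The empty subset immediately yields $c_0(s) = 1/n!$.

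The second step evaluates $\mu_j$ in closed form. Exploiting $\sum_{k=1}^{n+1}W_{1,k} = 1$, I would rewrite each factor as $1 - 2\tilde{U}_i = \sum_{k=1}^{n+1}\bigl(1 - 2\indic(k \le i)\bigr)W_{1,k}$, expand the product over $\ell = 1,\ldots,j$, and collect the monomials in the Dirichlet coordinates. A multi-index $(k_1,\ldots,k_j) \in \{1,\ldots,n+1\}^j$ contributes a sign $(-1)^{\sum_\ell \indic(k_\ell \le i_\ell)} = (-1)^{j}(-1)^{\sum_\ell \indic(k_\ell > i_\ell)}$ (using $\indic(k_\ell \le i_\ell) + \indic(k_\ell > i_\ell) = 1$) and a monomial $\prod_{p=1}^{n+1} W_{1,p}^{m_p}$ with $m_p = \#\{\ell : k_\ell = p\}$. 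The Dirichlet$(1,\ldots,1)$ moment formula $\expec\{\prod_p W_{1,p}^{m_p}\} = n!\,\prod_p m_p!/(n+j)!$ then yields $\mu_j(i_1,\ldots,i_j) = (-1)^j n!\, d_j(i_1,\ldots,i_j)$ with $d_j$ exactly as in \eqref{eq:dj}. Plugging this into the expression for $c_j(s)$ above, the signs cancel as $(-1)^{2j}=1$ and the factors combine as $(n!)^2/n! = n!$, producing formula~\eqref{eq:cj}.

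Finally, one must check that the polynomial has degree at most $n-1$, that is, that the $\theta^n$ coefficient vanishes. By the expansion above, this coefficient is a multiple of $\mu_n(1,\ldots,n) = \expec\{\prod_{i=1}^{n}(1 - 2\tilde{U}_i)\}$. Since $(\tilde{U}_1,\ldots,\tilde{U}_n)$ is the vector of order statistics of $n$ iid uniform$(0,1)$ variables $U_1,\ldots,U_n$, the product $\prod_i(1-2\tilde{U}_i)$ equals $\prod_i(1 - 2U_i)$ by symmetry, and independence gives expectation $\prod_i \expec(1-2U_i) = 0$. The main obstacle in the whole argument is the bookkeeping of the Dirichlet moment step: aligning the sign conventions $(-1)^{\indic(k_\ell\le i_\ell)}$ with the form chosen in \eqref{eq:dj} and tracking the interplay between the various factorials $n!$, $(n+j)!$, and $\prod_p m_p!$ is what requires care, but once this is done the statement follows directly from the expansion.
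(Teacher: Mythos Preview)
Your proof is correct and follows essentially the same route as the paper: expand the product of FGM densities over subsets, express $1-2\tilde U_i$ as a signed sum of the Dirichlet spacings, and evaluate the resulting Dirichlet moments to recover $d_j$ and hence $c_j(s)$. The only noteworthy difference is the argument for $c_n(s)=0$: the paper observes that $\sum_{s\in\Sgrp_n}c_j(s)=0$ for $j\ge 1$ (since the probabilities sum to one for every $\theta$) and that $c_n(s)=n!\{d_n(1,\ldots,n)\}^2$ is independent of $s$, forcing $c_n(s)=0$; your direct computation $\mu_n(1,\ldots,n)=\expec\prod_i(1-2U_i)=0$ via symmetry of the order statistics is a slightly cleaner alternative that avoids this detour.
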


The \FGM\ model gives rise to some symmetries in the rank likelihood. Let $a = (n, \ldots, 1) \in \Sgrp_n$ be the anti-identity. Note that $a^{-1} = a$ and put $a^0 = e$.

\begin{lemma}
\label{lem:symmetries}
For $s \in \Sgrp_n$ and $\theta \in [-1, 1]$, we have, for the \FGM\ copula family,
\begin{equation}
\label{eq:symmetries}
  \prob_\theta( S = s )
  =
  \prob_{(-1)^{i+j}\theta}( S = a^i \circ s^k \circ a^j ),
  \qquad i,j \in \{0, 1\}, \ k \in \{-1, 1\}.
\end{equation}
\end{lemma}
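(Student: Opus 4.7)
My plan is to derive all eight identities from three elementary distributional symmetries of the random permutation $S = R_Y \circ R_X^{-1}$ that are specific to the FGM family: (i) exchangeability of the copula, (ii) reflecting the $Y$-coordinates, and (iii) reflecting the $X$-coordinates.

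First I would establish the three generators. For (i), the FGM density $c_\theta(u,v) = 1 + \theta(1-2u)(1-2v)$ is symmetric in $(u,v)$, so $(U_i, V_i) \stackrel{d}{=} (V_i, U_i)$ jointly in $i$, which gives $(R_X, R_Y) \stackrel{d}{=} (R_Y, R_X)$ and hence $\prob_\theta(S = s) = \prob_\theta(S = s^{-1})$. For (ii), a direct computation yields $\prob(U \le u,\, 1-V \le v) = u - C_\theta(u, 1-v) = C_{-\theta}(u,v)$, so $(U_i, 1-V_i)$ are iid with copula $C_{-\theta}$. Since reflecting $V_i$ sends $R_Y(i)$ to $n+1-R_Y(i) = a(R_Y(i))$, the corresponding rank statistic for the reflected sample equals $a \circ S$. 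This gives $\prob_{-\theta}(S = s) = \prob_\theta(a \circ S = s)$, i.e. $\prob_\theta(S = s) = \prob_{-\theta}(S = a \circ s)$. By the symmetric argument applied to the $U$-coordinates, (iii) yields $\prob_\theta(S = s) = \prob_{-\theta}(S = s \circ a)$.

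Next I would check that composing the three generators reproduces all eight combinations of $(i,j,k)$ in \eqref{eq:symmetries}. The action on $s$ is: (ii) sends $s \mapsto a \circ s$, (iii) sends $s \mapsto s \circ a$, and (i) sends $s \mapsto s^{-1}$; using $a^{-1} = a$, any permutation of the form $a^i \circ s^k \circ a^j$ with $i,j \in \{0,1\}$ and $k \in \{-1,1\}$ is obtained by suitable composition. Each use of (ii) or (iii) flips the sign of $\theta$ whereas (i) preserves it, so the net factor on $\theta$ after composing is exactly $(-1)^{i+j}$, matching the statement. In particular, $(0,0,-1)$ is (i) alone, $(1,0,1)$ and $(0,1,1)$ are (ii) and (iii) alone, $(1,1,1)$ is (ii) then (iii) (the two sign flips cancel), and the three cases with $k=-1$ and at least one of $i,j$ nonzero come from combining (i) with the appropriate reflections.

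The only real obstacle is bookkeeping: confirming that the three generators interact as expected (in particular that inversion commutes with left/right multiplication by $a$ via $(a \circ s)^{-1} = s^{-1} \circ a$ and similarly on the other side), and that every eight-tuple in \eqref{eq:symmetries} is indeed realized. I would avoid the alternative combinatorial route of verifying the corresponding identities $c_j(a \circ s) = c_j(s \circ a) = (-1)^j c_j(s)$ and $c_j(s^{-1}) = c_j(s)$ directly from \eqref{eq:cj}--\eqref{eq:dj}, since the distributional argument above is much shorter.
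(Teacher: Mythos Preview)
Your proposal is correct and follows essentially the same approach as the paper: both exploit the exchangeability $c_\theta(u,v)=c_\theta(v,u)$ to get $\prob_\theta(S=s)=\prob_\theta(S=s^{-1})$ and the reflection property $c_\theta(1-u,v)=c_{-\theta}(u,v)$ to get $\prob_\theta(S=s)=\prob_{-\theta}(S=a\circ s)$, and then compose. The only difference is that you establish the right-multiplication identity $\prob_\theta(S=s)=\prob_{-\theta}(S=s\circ a)$ directly by reflecting the $U$-coordinates, whereas the paper derives it from the first two via $\prob_{-\theta}(S=s\circ a)=\prob_{-\theta}(S=a\circ s^{-1})=\prob_\theta(S=s^{-1})=\prob_\theta(S=s)$; your third generator is thus redundant but harmless.
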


Inspecting the proof of Lemma~\ref{lem:symmetries}, we see that the symmetry property \eqref{eq:symmetries} holds for any family of copula densities $(c_\theta : \theta \in \Theta)$ such that $c_\theta(u, v) = c_\theta(v, u)$ and $c_\theta(1-u, v) = c_{-\theta}(u, v)$ for all $(u, v) \in [0, 1]^2$, where it is assumed that the parameter set $\Theta \subset \reals$ is symmetric around the origin. Besides the \FGM\ family, this includes, after reparametrization, the bivariate Frank, Plackett, and Gauss copula families.

\subsection{Marginal distribution of the rank statistic}

Let the \FGM\ parameter $\theta$ have prior density $\pi$ over $\Theta=[-1,1]$. It follows from Theorem~\ref{thm:FGM} that the marginal distribution of $S$ is
\begin{equation*}
  \prob(S=s)
  = \expec_{\pi}\{\prob_{\theta}(S=s)\}
  = \int_{-1}^1 \prob_{\theta}(S=s) \, \pi(\theta) \, d\theta 
  = \sum_{j=0}^{n-1} c_j(s) \, \int_{-1}^1 \theta^j \, \pi(\theta) \, d\theta,
\end{equation*}
for $s \in \Sgrp_n$. The marginal probabilities $\prob(S = s)$ are directly obtained via the calculation of the moments of order $1, \ldots, n-1$ of the prior distribution.

The symmetries found in Lemma~\ref{lem:symmetries} for the rank likelihood carry through to the marginal distribution. If the prior on $[-1, 1]$ is symmetric, which is the case for Jeffreys' prior discussed below, we obtain the equality
\begin{equation}
 \label{eq:symeq}
 \prob(S=s)=\prob(S=a^i\circ s^k \circ a^j), \qquad i,j\in \{0,1\}, k\in \{-1,1\},
\end{equation}
with $a = (n, n-1, \ldots, 1) \in \Sgrp_n$ the anti-identity. This symmetry property reduces the number of distinct values for the probabilities $\prob(S=s)$, $s\in \Sgrp_n$, $n\geq 2$. 

Next we investigate the mode of the marginal distribution of $S$. In many cases, the identity, $e$, or the anti-identity, $a$, are modes, and sometimes even both rankings are modal. The next result gives sufficient conditions for $e$ or $a$ to be modal rankings.

\begin{theorem}
\label{thm:mode}
Let $\pi$ be a (prior) density on the \FGM\ parameter $\theta \in \Theta = [-1, 1]$.
\begin{compactenum}[(i)]
\item If the odd order moments of $\pi$ are nonnegative, that is, if $\expec_{\pi}(\theta^{2k+1})\geq 0$, for $k=0,\ldots, \lfloor n/2 \rfloor -1$, then the identity $s=e$ is a mode of the marginal distribution $\prob(S=s)$, $s\in \Sgrp_n$.
\item If the odd order moments of $\pi$ are nonpositive, then the anti-identity $s=a$ is a mode of the marginal distribution.
\end{compactenum}
\end{theorem}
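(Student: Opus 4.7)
The plan is to exploit the polynomial form of $\prob(S = s)$ given by Theorem~\ref{thm:FGM}. Integrating \eqref{eq:FGM} against the prior and writing $m_j := \int_{-1}^{1} \theta^j \pi(\theta) \, \diff\theta$, one obtains
\[
  \prob(S = s) = \frac{1}{n!} + \sum_{j=1}^{n-1} c_j(s) \, m_j, \qquad s \in \Sgrp_n.
\]
Part (i) will then follow from two claims: (a) $c_j(e) \ge c_j(s)$ for every $j \in \{1, \ldots, n-1\}$ and every $s \in \Sgrp_n$; and (b) $m_j \ge 0$ for every such $j$. Claim (b) is essentially free: the even moments $m_{2k}$ are automatically nonnegative because $\theta^{2k} \ge 0$, and the hypothesis of (i) supplies nonnegativity of the odd moments, since the range $k = 0, \ldots, \lfloor n/2 \rfloor - 1$ produces exactly the odd indices $2k+1$ lying in $\{1, \ldots, n-1\}$.

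For claim (a), the key observation would be that $d_j(i_1, \ldots, i_j)$ in \eqref{eq:dj} is symmetric in its arguments. A swap $i_p \leftrightarrow i_q$ can be compensated by a simultaneous relabelling of the dummy variables $k_p \leftrightarrow k_q$, and the factorial factor $\prod_{p} \{\sum_\ell \indic(k_\ell = p)\}!$ depends only on the multiset of the $k_\ell$'s and so is invariant under this relabelling. This lets me write $d_j(I)$ for a $j$-subset $I \subset {\cal N}$ and recast \eqref{eq:cj} as
\[
  c_j(s) = n! \sum_{I \in \binom{{\cal N}}{j}} d_j(I) \, d_j(s(I)), \qquad s(I) := \{ s(i) : i \in I \}.
\]
Because $s$ induces a bijection on $\binom{{\cal N}}{j}$, the vectors $(d_j(I))_I$ and $(d_j(s(I)))_I$ have the same $\ell^2$-norm, so Cauchy-Schwarz yields $|c_j(s)| \le n! \sum_I d_j(I)^2 = c_j(e)$, which closes (a).

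For part (ii), I would reduce to part (i) via the symmetry $\prob_\theta(S = s) = \prob_{-\theta}(S = a \circ s)$ supplied by Lemma~\ref{lem:symmetries}. Integrating against $\pi$ and changing variables $\theta \mapsto -\theta$ gives $\prob(S = a \circ s) = \widetilde{\prob}(S = s)$, where $\widetilde{\prob}$ denotes the marginal under the reflected prior $\widetilde{\pi}(\theta) := \pi(-\theta)$. The odd moments of $\widetilde{\pi}$ are the negatives of those of $\pi$, so nonpositive odd moments of $\pi$ translate into nonnegative odd moments of $\widetilde{\pi}$; applying (i) to $\widetilde{\pi}$ shows that $e$ is a mode of $\widetilde{\prob}$, and the bijection $s \mapsto a \circ s$ (with $a \circ e = a$) transports this to the statement that $a$ is a mode of $\prob$.

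I expect the main obstacle to be establishing the symmetry of $d_j$ cleanly: the expression \eqref{eq:dj} entangles each argument $i_\ell$ with its matched sign $\indic(k_\ell > i_\ell)$, and disentangling this by a simultaneous relabelling of the dummy variables takes careful index bookkeeping. Once that is in hand, the Cauchy-Schwarz bound and the reflection argument for (ii) are both short.
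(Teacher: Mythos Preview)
Your proposal is correct and follows essentially the same route as the paper: symmetry of $d_j$ in its arguments, then Cauchy--Schwarz to obtain $|c_j(s)| \le c_j(e)$, then the sign hypothesis on the moments. The only difference is in part~(ii): the paper handles both cases at once via the coefficient identity $c_j(a\circ s)=(-1)^j c_j(s)$ (which it reads off from $\prob_\theta(S=s)=\prob_{-\theta}(S=a\circ s)$), whereas you reduce (ii) to (i) by reflecting the prior---both arguments are short and rest on the same symmetry from Lemma~\ref{lem:symmetries}.
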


\subsection{Prior distributions}

We consider two priors on the \FGM\ family, namely the Beta distribution and Jeffreys' prior.

For the Beta prior, let $\theta=2T-1$ with $T \sim \operatorname{Beta}(\alpha,\beta)$ and parameters $\alpha>0$ and $\beta>0$,  and let $\pi_{\alpha,\beta}$ denote the resulting density. The moments of $\theta$ are easily computed:
\[
  \int_{-1}^1 \theta^n \, \pi_{\alpha,\beta}(\theta) \, \diff \theta
  =
  \frac{(-1)^n}{(n+1) B(\alpha,\beta)}
  \sum_{k=0}^n (-1)^k \frac{B(\alpha+k,\beta+n-k)}{B(1+k,1+n-k)}, 
  \qquad n = 0, 1, 2, \ldots,
\]
where $B(\alpha,\beta)$, for $\alpha>0, \beta>0$, is the beta function. We have a
corollary to Theorem~\ref{thm:mode} for this particular choice of prior.

\begin{corollary}
\label{cor:mode}
Let $\theta=2T-1$ with $T\sim \operatorname{Beta}(\alpha,\beta)$ for $\alpha > 0$ and $\beta > 0$.
\begin{compactenum}[(i)]
\item If $0<\beta \leq \alpha$, then the identity $s=e$ is a mode of the marginal distribution of $S$.
\item If $0<\alpha\leq \beta$, then the anti-identity $s=a$ is a mode of the marginal distribution of $S$.
\end{compactenum}
\end{corollary}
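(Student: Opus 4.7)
The plan is to derive Corollary~\ref{cor:mode} directly from Theorem~\ref{thm:mode} by determining the signs of the odd-order moments of $\theta = 2T - 1$ when $T \sim \operatorname{Beta}(\alpha, \beta)$. Since $\expec_\pi(\theta^{2k+1}) = 2^{2k+1} \mu_k$ with $\mu_k := \expec[(T - 1/2)^{2k+1}]$, it suffices to show $\mu_k \geq 0$ for every $k \geq 0$ whenever $\alpha \geq \beta$, and $\mu_k \leq 0$ whenever $\alpha \leq \beta$. The two halves of the corollary then follow from items (i) and (ii) of Theorem~\ref{thm:mode}.

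The central step is a symmetrization about $1/2$ exploiting the identity $1 - T \sim \operatorname{Beta}(\beta, \alpha)$. First I would split the integral defining $\mu_k$ at $1/2$, and in the piece over $[0, 1/2]$ apply the substitution $t \mapsto 1 - t$, which flips the sign of $(t - 1/2)^{2k+1}$ and sends the $\operatorname{Beta}(\alpha, \beta)$ density into the $\operatorname{Beta}(\beta, \alpha)$ density. Combining the two halves yields
\[
  \mu_k = \frac{1}{B(\alpha, \beta)} \int_{1/2}^1 (t - 1/2)^{2k+1} \bigl[ t^{\alpha-1}(1-t)^{\beta-1} - t^{\beta-1}(1-t)^{\alpha-1} \bigr] \, dt.
\]
Factoring $t^{\beta-1}(1-t)^{\beta-1}$ out of the bracket rewrites the integrand as $(t-1/2)^{2k+1} \, t^{\beta-1}(1-t)^{\beta-1} \, [\,t^{\alpha-\beta} - (1-t)^{\alpha-\beta}\,]$.

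For $t \in (1/2, 1)$ the product $(t - 1/2)^{2k+1} \, t^{\beta-1}(1-t)^{\beta-1}$ is strictly positive, and the inequality $t > 1 - t > 0$ together with the monotonicity of $x \mapsto x^{\alpha - \beta}$ on $(0, \infty)$ shows that $t^{\alpha-\beta} - (1-t)^{\alpha-\beta}$ has the sign of $\alpha - \beta$, being zero precisely when $\alpha = \beta$. Consequently $\mu_k \geq 0$ when $\alpha \geq \beta$ and $\mu_k \leq 0$ when $\alpha \leq \beta$, and Theorem~\ref{thm:mode} finishes the argument.

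I do not anticipate any real obstacle here; the argument is essentially a reflection of the Beta density about $1/2$, and the bookkeeping is routine. The only mildly delicate point worth stating explicitly is that $\alpha - \beta$ may be negative, in which case $x^{\alpha - \beta}$ is still well-defined on $(0, \infty)$ and strictly decreasing, so the sign analysis of $t^{\alpha-\beta} - (1-t)^{\alpha-\beta}$ goes through uniformly in the sign of $\alpha - \beta$.
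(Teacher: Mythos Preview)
Your proof is correct and follows essentially the same approach as the paper: both reduce to Theorem~\ref{thm:mode} by determining the sign of the odd moments via a reflection about $1/2$. The paper packages the reflection slightly differently---writing the moment as $\tfrac{B(\beta,\beta)}{B(\alpha,\beta)}\,\expec[(2X-1)^k\{X^{\alpha-\beta}-(1/2)^{\alpha-\beta}\}]$ with $X\sim\operatorname{Beta}(\beta,\beta)$ symmetric---but the underlying idea and the sign analysis are the same as yours.
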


We now compute Jeffreys' prior $\pi_J(\theta)\varpropto \sqrt{I(\theta)}$, for $\theta 
\in [-1,1]$, with $I(\theta)$ the Fisher information at $\theta$. We have
\[
  I(\theta)
  =
  \int_{(0, 1)^2}
    \left(\frac{\partial}{\partial \theta}\log c_{\theta}(u, v)\right)^2 \, 
  \diff (u, v) \\
  =
  \begin{cases}
  \frac{1}{9}
  &\text{if $\theta=0$,} \\
  \frac{1}{\theta^2} \left\{\frac{\LI_2(\theta)-\LI_2(-\theta)}{2\theta} -1\right\}
  &\text{if $\theta \in [-1, 1] \setminus \{ 0 \}$,}
  \end{cases}
\]
where $\LI_2(x) = -\int_0^1 y^{-1} \log(1-x y) \, \diff y =   \sum_{k=1}^{\infty} k^{-2} x^k$, for $x \leq 1$, is the dilogarithm function. It follows that the Fisher information is
\[
  I(\theta) 
  = 
  \sum_{k=0}^{\infty}
  \frac{\theta^{2k}}{(2k+3)^2}, \qquad \theta \in [-1, 1].
\]
See Figure~\ref{fig:Jeff}(a) for a graph of Jeffreys' prior, $\pi_J$. Its odd order moments vanish because of symmetry, and its even order moments can be computed by numerical quadrature.

Jeffreys' prior $\pi_J$ being symmetric, we compare it with the symmetric subfamily $\pi_{\alpha,\alpha}$, $\alpha>0$, of the Beta prior. We consider the total variation distance between $\pi_J$ and $\pi_{\alpha,\alpha}$, i.e.,
\[
  \operatorname{TV}(\alpha)
  =
  \frac{1}{2}
  \int_{-1}^1 
    \lvert \pi_J(\theta) - \pi_{\alpha,\alpha}(\theta) \rvert 
  \, \diff \theta, \qquad \alpha>0.
\]
The minimal value of $\operatorname{TV}(\alpha)$, obtained numerically, is attained when $\alpha=0{\cdot}88$, with $\operatorname{TV}(0{\cdot}88)=0{\cdot}0082$, see Figure~\ref{fig:Jeff}(b). The symmetric Beta prior with this value for $\alpha$ may be a numerically tractable alternative to Jeffreys' prior.

\begin{figure}
\begin{center}
\begin{tabular}{cc}
\includegraphics[width=0.4\textwidth]{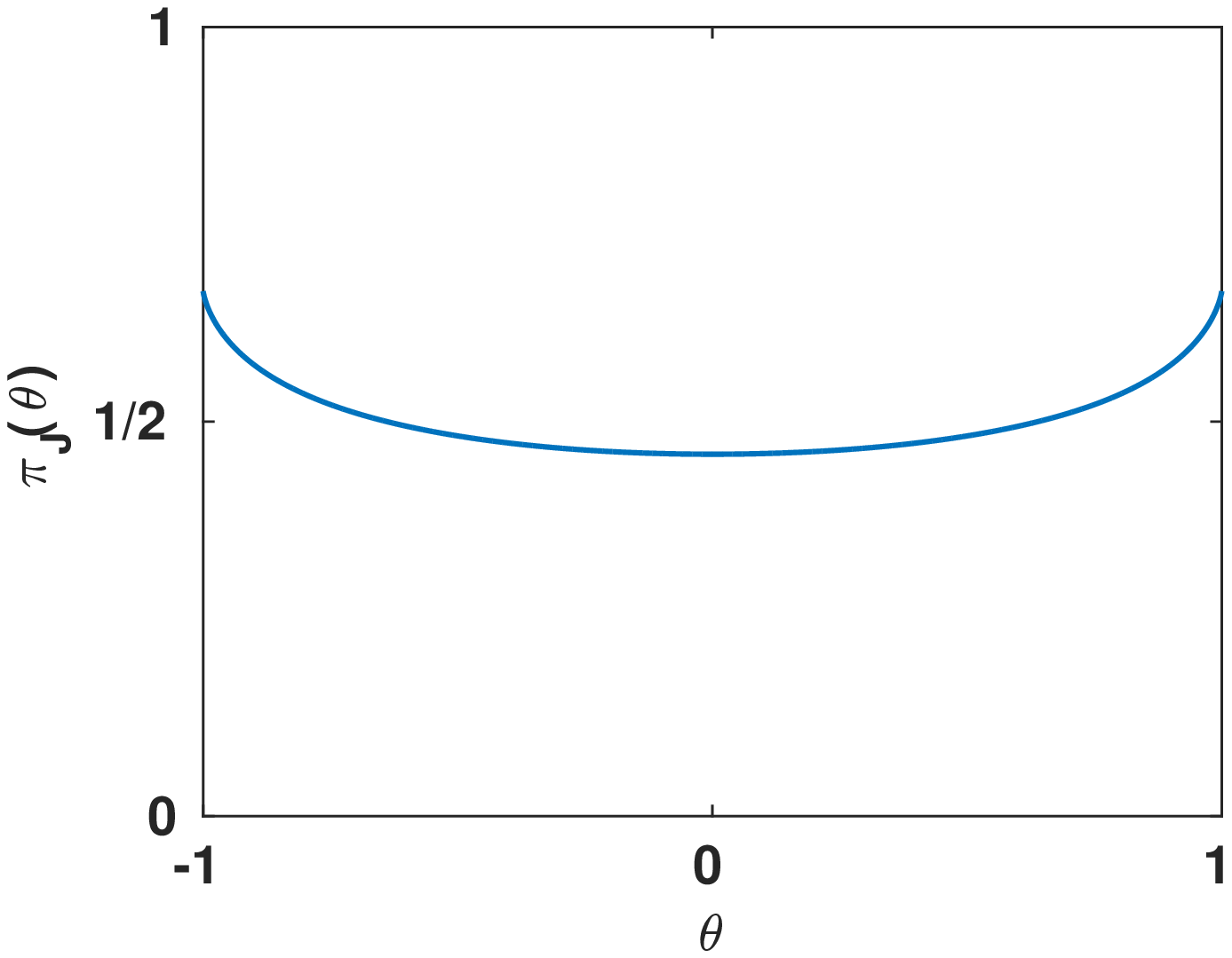}&
\includegraphics[width=0.4\textwidth]{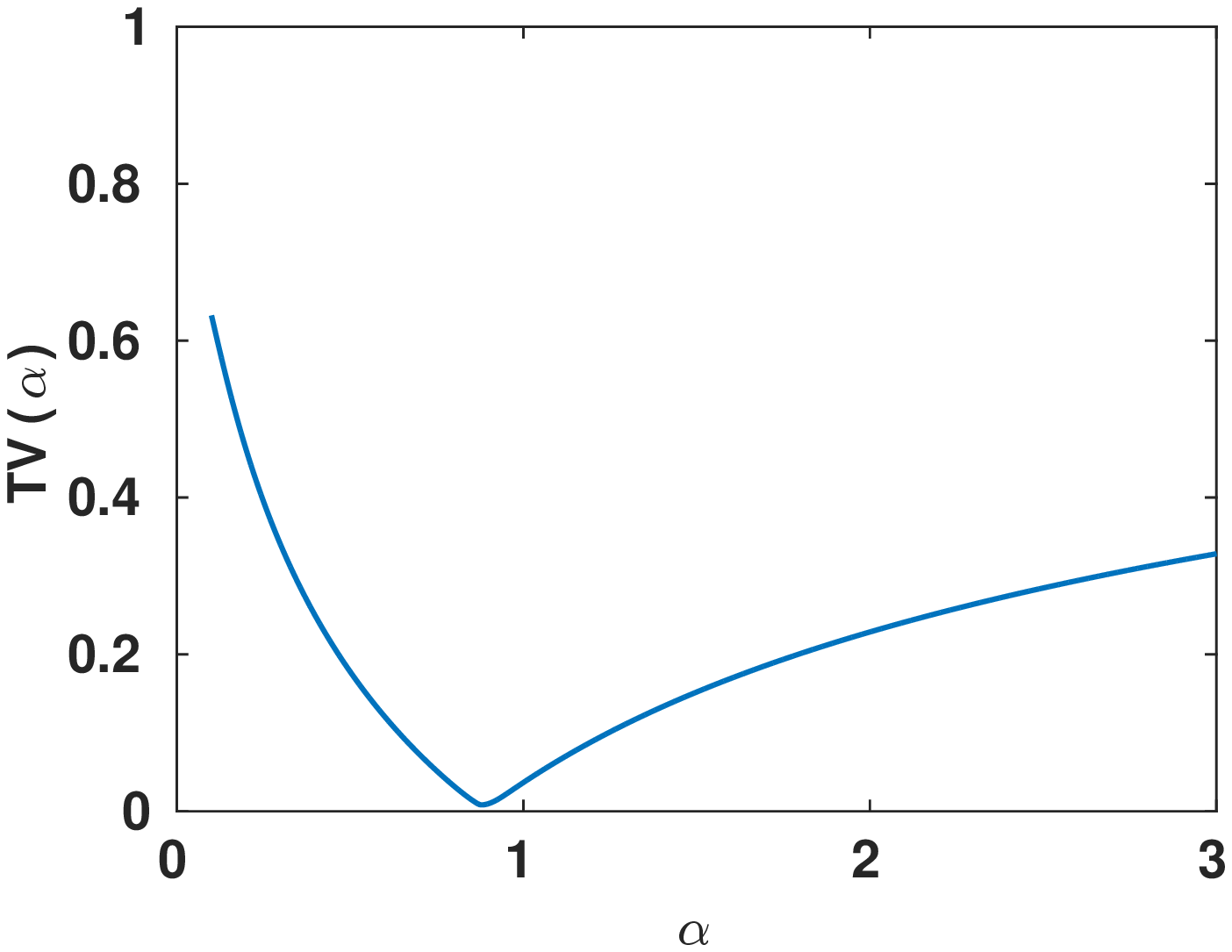}
\end{tabular}
\end{center}
\caption{Jeffreys' prior (left) for the FGM family and its total variation distance (right) to the density of the random variable $\theta=2T-1$, where $T\sim \operatorname{Beta}(\alpha,\alpha)$ and $\alpha \in [0{\cdot}1,3]$.}
\label{fig:Jeff}
\end{figure} 

We illustrate the marginal distributions $\prob(S=s)$, $s\in \Sgrp_n$, obtained 
with Jeffreys' prior and with an asymmetrical Beta prior. The lack of a universal total ordering on $\Sgrp_n$ makes graphing a bit difficult. We visualize the marginal distributions arising from both priors by plotting the marginal probabilities of $s \in \Sgrp_n$ against the Kendall distance, $d_{\tau}$, of $s$ from the modal rankings, the latter depending on the prior. The Kendall distance \citep{Diaconis88} on $\Sgrp_n$ is given essentially by the number of discordances between two permutations; more precisely,
\begin{equation}
\label{eq:dtau}
  d_{\tau}(s,s')
  =
  \sum_{1\leq i<j \leq n}
  \indic(s'\circ s^{-1}(i) > s'\circ s^{-1}(j)), 
  \qquad s,s' \in \Sgrp_n.
\end{equation}
In particular, we have the relation $0\leq \binom{n}{2}^{-1}d_{\tau}(s,s')=\{1-\tau(s,s')\}/2\leq 1$, where $\tau$ is the sample version of Kendall's tau of the sample $(s(1), s'(1)), \ldots, (s(n), s'(n))$.

\begin{figure}
\begin{center}
\begin{tabular}{cc}
\includegraphics[width=0.4\textwidth]{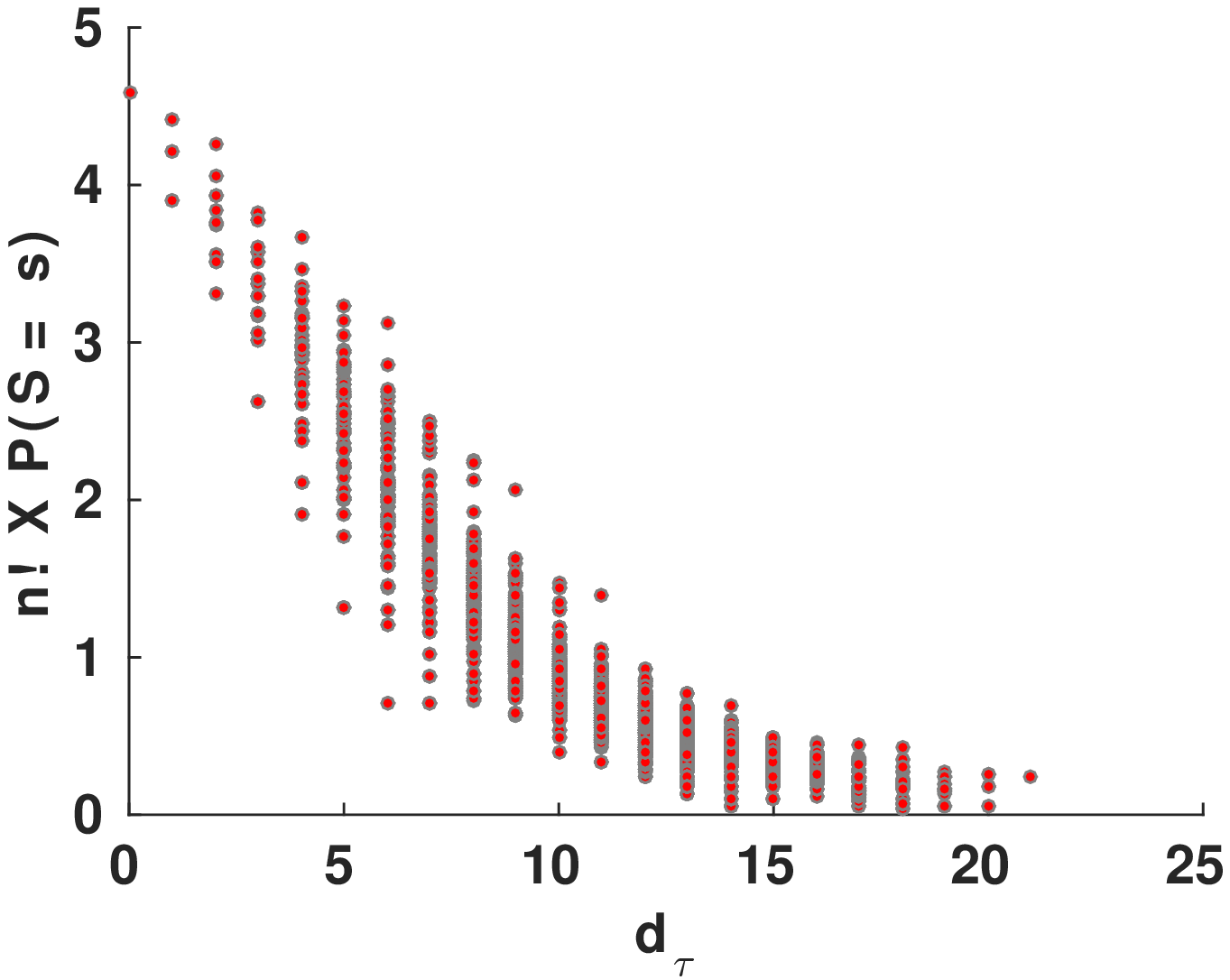}&
\includegraphics[width=0.4\textwidth]{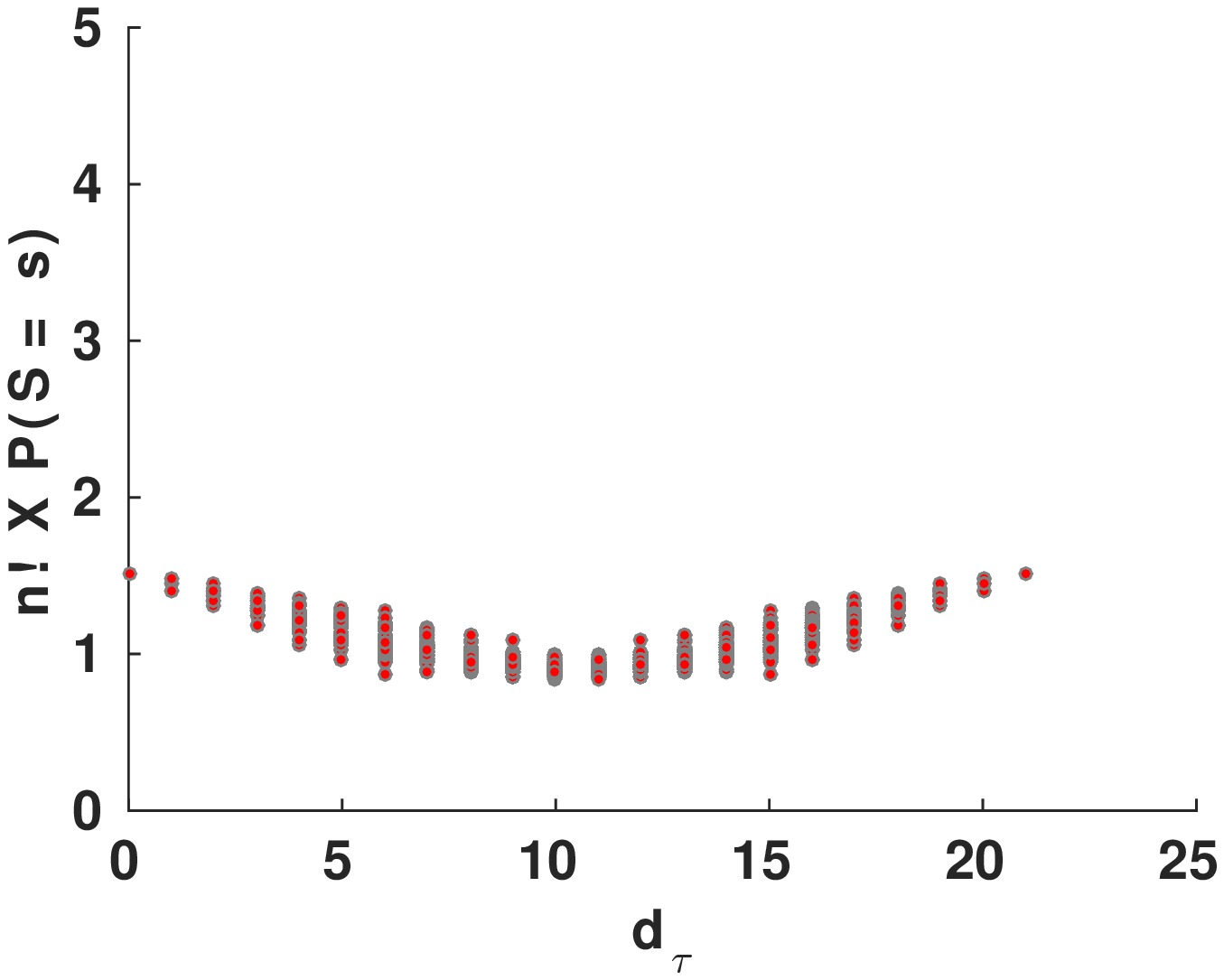}
\end{tabular}
\end{center}
\caption{Rescaled marginal distribution $n! \prob(S=s)$, $s\in 
S_7$, for the \FGM\ copula family, plotted against the Kendall distance $d_\tau(s, s_\pi)$ of the rankings $s$ from the modal rank $s_\pi$ for the given prior $\pi$. Left: asymmetrical Beta prior $\theta=2T-1$ with $T\sim \text{Beta}(1/10,2)$. Right: Jeffreys' prior $\pi_J$.}
\label{fig:mardist}
\end{figure}

The marginal distributions are illustrated in Figure~\ref{fig:mardist} for an asymmetric Beta prior on the left and for Jeffrey's prior on the right. The superposition of points is explained by Lemma~\ref{lem:symmetries} and equation~\eqref{eq:symeq}. For the asymmetric Beta prior, Corollary~\ref{cor:mode} implies that the mode of the marginal distribution is the anti-identity $s = a$. Jeffrey's prior is symmetric, so that, by Theorem~\ref{thm:mode}, both the identity, $s = e$, and the anti-identity, $s = a$, are modes of the marginal distribution. The symmetry that appears for Jeffrey's prior is also an artifact of the \FGM\ model and will also appear for other exchangeable and radially symmetric copula families, as discussed after Lemma~\ref{lem:symmetries}. Since $a \circ \sigma=(n+1-\sigma(1),\ldots,n+1-\sigma(n))$, with $a = (n, \ldots, 1)$ the anti-identity and $\sigma \in \Sgrp_n$, we get
\begin{equation}
\label{eq:symdist}
  d_{\tau}(s,a\circ s')
  =
  \binom{n}{2} - d_{\tau}(s,s'), \qquad s,s' \in \Sgrp_n.
\end{equation}
Together with the equality~\eqref{eq:symeq}, we obtain that the marginal probabilities are symmetrical with respect to the midrange distance $\binom{n}{2}/2$.

\subsection{Predictive distribution}

The posterior predictive distribution, $p(s)$, of the rank statistic $S$ is equal to the marginal distribution conditioned on the event $\{ S \in {\cal C}(s^*, {\cal M}^*) \}$; see \eqref{eq:preddist}. The polynomial form of the rank-likelihood \eqref{eq:FGM} induced by the \FGM\ family together with moment formulas for the prior distributions then allow us to compute predictive probabilities $p(s)$ exactly.

To provide an example, take as a toy problem the incomplete rankings $(-,2,-,1,3,-,-)$ or in other words, $n=7$, ${\cal M}^*=\{2,4,5\}$ and $s^*=(2,1,3)$, and consider the same two priors as in Figure~\ref{fig:mardist}. The predictive distribution $p(s)$, $s \in \mathcal{C}(s^*,{\cal M}^*)$, from equation~\eqref{eq:preddist} is illustrated in Figure~\ref{fig:preddist}. The predictive distribution associated to Jeffreys' prior has two modes, $s=(1,4,2,3,5,6,7)$ and $s^{-1}=(1,3,4,2,5,6,7)$. We will return to this toy example in Section~\ref{S:Algo2}.

\begin{figure}
\begin{center}
\begin{tabular}{cc}
\includegraphics[width=0.4\textwidth]{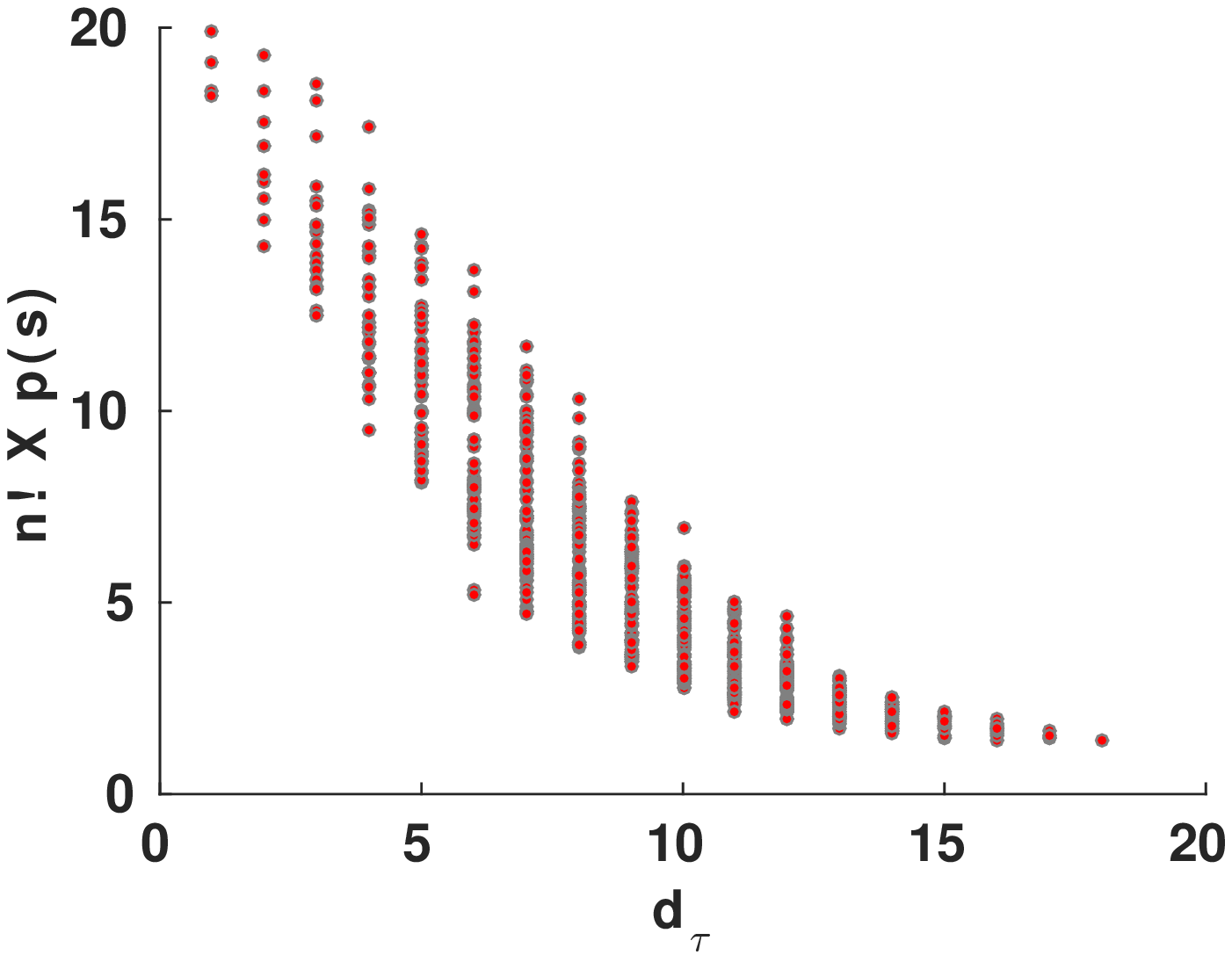}&
\includegraphics[width=0.4\textwidth]{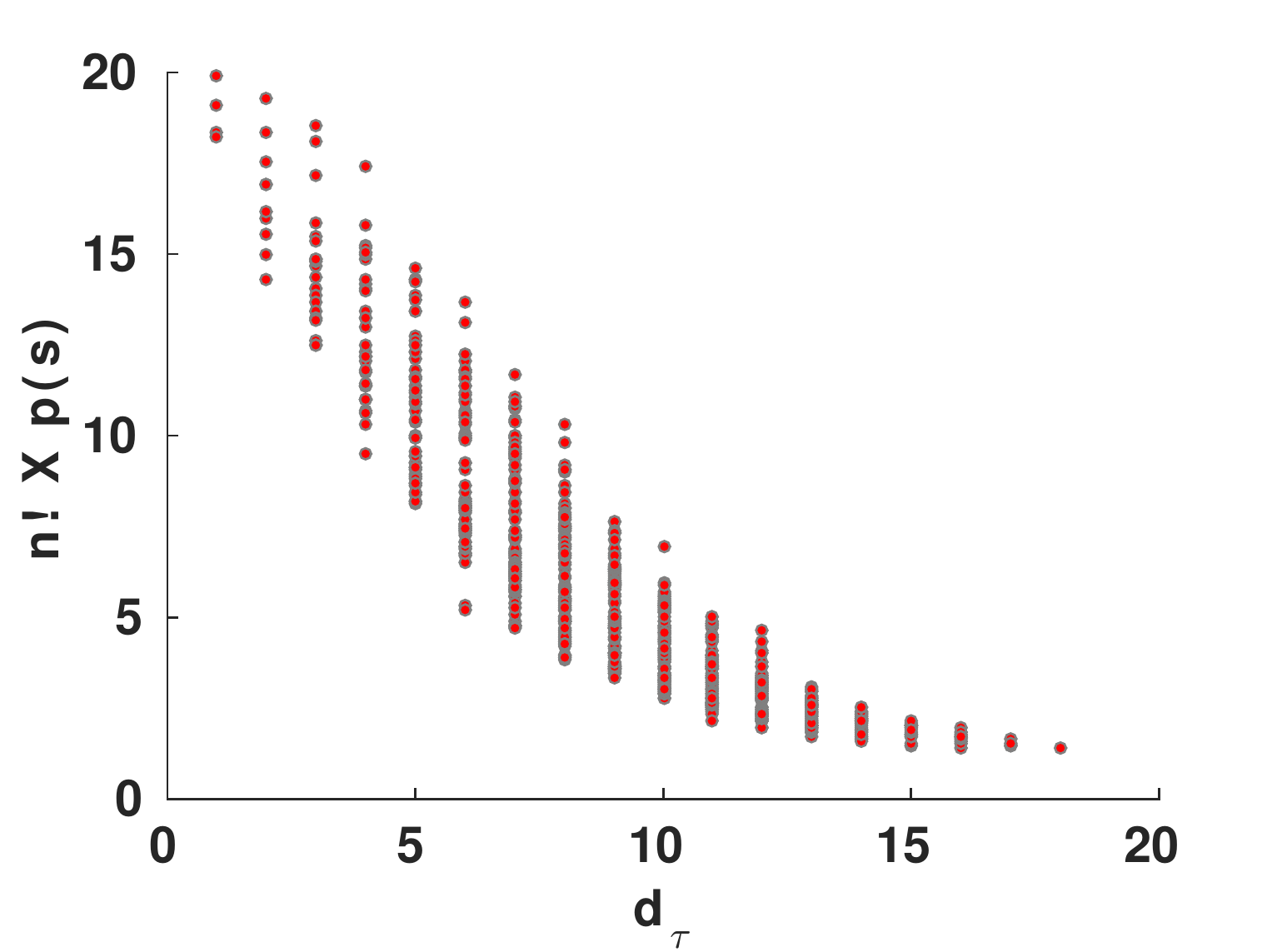}
\end{tabular}
\end{center}
\caption{Rescaled predictive probabilities, $n! \, p(s)$, of all compatible rankings 
$s\in \mathcal{C}(s^*,{\cal M}^*)$, where $n=7$, ${\cal M}^*=\{2,4,5\}$, and
$s^*=(2,1,3)$, using the FGM copula family. The values are plotted against the 
Kendall distance of the rankings to the modal rank. Left: asymmetrical Beta prior $\theta=2T-1$ with $T\sim \text{Beta}(1/10,2)$. Right: Jeffreys' prior $\pi_J$.}
\label{fig:preddist}
\end{figure} 

In contrast to the marginal distribution, the posterior predictive distribution arising from Jeffrey's prior is no longer symmetric around the midrange distance $\binom{n}{2}/2$: compare the right-hand panels of Figures~\ref{fig:mardist} and~\ref{fig:preddist}. Recall that the symmetry property of the marginal distribution is due to a combination of equations~\eqref{eq:symeq} and~\eqref{eq:symdist}. For the predictive distribution, this explanation breaks down, because $s \in {\cal C}(s^*, {\cal M}^*)$ implies $a \circ s \not\in {\cal C}(s^*, {\cal M}^*)$. Indeed, a 
compatible ranking $s \in \mathcal{C}(s^*,{\cal M}^*)$ must satisfy 
\[
  s(i_{\sigma(1)}^*) < s(i_{\sigma(2)}^*) < \cdots <s(i_{\sigma(m)}^*),
  \qquad \sigma=(s^*)^{-1},
\]
but for $s' = a \circ s = (n+1-s(1),\ldots,n+1-s(n))$, we have $s'(i_{\sigma(1)}^*) > s'(i_{\sigma(2)}^*) > \cdots > s'(i_{\sigma(m)}^*)$, and so $a \circ s \notin \mathcal{C}(s^*,{\cal M}^*)$. In passing, note that, in contrast to the ranking $a \circ s$, the ranking $s\circ a$ may or may not belong to the compatible rankings. Take for instance $n = 3$, ${\cal M}^* = \{1, 2\}$, and $s^* = (1, 2)$. On the one hand, we have $e = (1, 2, 3) \in \mathcal{C}(s^*, {\cal M}^*)$ but $a = (3, 2, 1) \notin \mathcal{C}(s^*, {\cal M}^*)$. On the other hand, if $s = (2, 3, 1)$, we have both $s \in \mathcal{C}(s^*, {\cal M}^*)$ and $s \circ a = (1, 3, 2) \in \mathcal{C}(s^*, {\cal M}^*)$.

\section{Algorithms}
\label{sec:algo}


\subsection{Drawing compatible rankings}
\label{S:Algo0}

The mode of the posterior predictive distribution is the ranking used in order to make a recommendation to the individual. The simulated annealing algorithm proposed in Section~\ref{S:Algo1} below gives a way to approximate this mode. It is based on an algorithm to draw random compatible rankings. In practice, the cardinality, $n!/m!$, of $\mathcal{C}(s^*,{\cal M}^*)$ can be enormous, and a complete listing of all compatible rankings is elusive. One way to draw samples from the uniform distribution over ${\cal C}(s^*, {\cal M}^*)$ is to draw a permutation $\tilde{s}$ randomly from $\Sgrp_n$ and then turn it to a compatible ranking $s \in {\cal C}(s^*, {\cal M}^*)$ by rearranging $(\tilde{s}(i_1^*), \ldots, \tilde{s}(i_m^*))$ in such a way that $\rank(s(i_1^*), \ldots, s(i_m^*)) = s^*$. This algorithm could be used for constructing Markov chain Monte Carlo algorithms with independent proposals. Here, we are interested in random walk type proposals, and so we construct an ergodic Markov chain on ${\cal C}(s^*, {\cal M}^*)$ which happens to have a uniform stationary distribution. It will be used as a proposal distribution in Algorithms~\ref{algo:modeapprox} and~\ref{algo:predapprox} below.

\begin{algo}
\label{algo:compat}
Let $s \in {\cal C}(s^*, {\cal M}^*)$ be the current state of the chain. The next state $s' \in {\cal C}(s^*, {\cal M}^*)$ is obtained by selecting at random one 
move between moves $\Move_{1}$ and $\Move_{2}$ with equal probability.
\begin{compactenum}[${\Move}_1$ --]
\item \emph{Swap move.} Draw a pair $\{i,j\}$ where $i,j\in\mathcal{M}^c$, $i<j$ (with probability 
$1/\binom{n-m}{2}$), and take $s'$ such that 
$s'(i)=s(j)$, $s'(j)=s(i)$, and $s'(t)=s(t),$ for every $t\in\{1,\ldots,n\}\setminus\{i,j\}.$
\item \emph{Swap and rearrange move.} Draw $\ell\in\{1,\ldots,m\}$ and 
$j\in\mathcal{M}^c$ (with probability 
$1/[m(n-m)]$) and then take $s'$ such that $s'(j)=s(i_{\ell}^*)$, $s'(t)=s(t)$, 
for every $t\in 
\mathcal{M}^c\setminus\{j\}$, and such that $\{s'(i_k^*)\colon 
k=1,\ldots,m\}=\{s(i_k^*)\colon k=1,\ldots,m, 
k\not=\ell\}\cup\{s(j)\}$, with 
\[s'(i_{\sigma(1)}^*)<s'(i_{\sigma(2)}^*)<\cdots <s'(i_{\sigma(m)}^*), \quad 
\sigma=(s^*)^{-1}.\]
\end{compactenum}
\end{algo}

\begin{lemma}
\label{lem:compat}
If $1<m<n$, then Algorithm~\ref{algo:compat} generates an irreducible and aperiodic Markov chain with uniform stationary distribution on ${\cal C}(s^*, {\cal M}^*)$.
\end{lemma}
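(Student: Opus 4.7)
The plan is to establish three properties in turn: reversibility of the kernel with respect to the uniform law (which gives uniform stationarity), irreducibility, and aperiodicity.

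First I would verify that the transition kernel $P(s, s')$ is symmetric on ${\cal C}(s^*, {\cal M}^*)$. For move $\Move_1$, the swap of two positions $i, j \in {\cal M}^{*c}$ is self-inverse and proposed from both endpoints with probability $1/[2 \binom{n-m}{2}]$. For move $\Move_2$ with parameters $(\ell, j)$ sending $s$ to $s'$, the reverse is the $\Move_2$ move with the same $j$ and with $\ell'$ chosen so that $i_{\ell'}^*$ is the position in ${\cal M}^*$ at which $s(j)$ landed after the rearrangement; since positions outside ${\cal M}^* \cup \{j\}$ are untouched and the compatibility constraint uniquely orders any given set of $m$ values on ${\cal M}^*$, this reverse does recover $s$, with the same probability $1/[2 m (n-m)]$. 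Summing over both move types gives $P(s, s') = P(s', s)$, so the uniform law is stationary.

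For irreducibility, observe that a compatible ranking $s$ is entirely determined by the values $(s(i))_{i \in {\cal M}^{*c}}$, the remaining values at ${\cal M}^*$ being forced by the compatibility order. Given $s, s'' \in {\cal C}(s^*, {\cal M}^*)$, I would first use $\Move_2$ to align the value sets $V_s = \{s(i) : i \in {\cal M}^{*c}\}$ and $V_{s''}$: if $V_s \neq V_{s''}$, pick $v \in V_{s''} \setminus V_s$ located at some $i_k^*$ in $s$ and some $j \in {\cal M}^{*c}$ with $s(j) \notin V_{s''}$; then the move $\Move_2$ with parameters $(k, j)$ decreases $|V_s \triangle V_{s''}|$ by two. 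Once $V_s = V_{s''}$, $\Move_1$ swaps, which are arbitrary transpositions on ${\cal M}^{*c}$, generate $\Sgrp_{n-m}$ and bring the ordering into line with $s''$ (if $n-m = 1$ the first stage already suffices).

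For aperiodicity, fix any $j \in {\cal M}^{*c}$ and restrict attention to $\Move_2$ moves with second parameter $j$. Such a move leaves every value at a position outside ${\cal M}^* \cup \{j\}$ unchanged, so by the compatibility constraint the state is entirely parametrised by $s(j)$, which ranges over the $(m+1)$-element set $V = \{s(i) : i \in {\cal M}^* \cup \{j\}\}$. The move with parameters $(\ell, j)$ sends $s(j)$ to $s(i_\ell^*)$, so from any state every other element of $V$ is reachable in one step; since $m \geq 2$ implies $|V| \geq 3$, one can return to $s$ in two such moves (there and back) and also in three (traversing a triangle in $V$), so that $2$ and $3$ lie in the set of return times, whence $\gcd(2,3)=1$.

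The main point requiring care throughout is the bookkeeping for $\Move_2$: checking that the rearrangement step is invertible in the sense needed for reversibility, and that, restricted to a fixed $j$, it acts transitively on the $(m+1)$-element set of values on ${\cal M}^* \cup \{j\}$ so that an odd cycle genuinely closes up. Both reduce to the elementary observation that the compatibility order uniquely arranges any given set of $m$ values on the positions of ${\cal M}^*$.
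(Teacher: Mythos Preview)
Your proof is correct and follows essentially the same route as the paper: symmetry of the transition kernel gives the uniform stationary law, aperiodicity is obtained exactly as you do by fixing a single $j\in{\cal M}^{*c}$ and using $\Move_2$-moves to close loops of lengths $2$ and $3$ (this is where $m\ge 2$ enters), and irreducibility is a short connectivity argument on the values at ${\cal M}^{*c}$. The only cosmetic difference is in the irreducibility step: the paper corrects the positions $t_1,\ldots,t_{n-m}\in{\cal M}^{*c}$ one at a time, calling $\Move_1$ or $\Move_2$ according to whether the target value currently sits in ${\cal M}^{*c}$ or in ${\cal M}^*$, whereas you first match the value \emph{sets} on ${\cal M}^{*c}$ via $\Move_2$ and then permute with $\Move_1$; both are equally elementary.
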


\subsection{Finding the modal ranking of the predictive distribution}
\label{S:Algo1}

A simple way to compute the predictive probabilities $p(s)$ in~\eqref{eq:preddist} is by standard Monte-Carlo approximations, exploiting Theorem~\ref{thm:RL} for the rank likelihood. Combining this with draws from the compatible rankings in ${\cal C}(s^*, {\cal M}^*)$ according to Algorithm~\ref{algo:compat}, we propose the following simulated annealing algorithm for approximating~\eqref{eq:shat}.

\begin{algo}
\label{algo:modeapprox}
At each iteration $t$, let $S_t \in {\cal C}(s^*, {\cal M}^*)$ be the current 
state of the rankings. 
\begin{compactenum}[$\text{SA}_1$.]
\item Draw $S'$ according to Algorithm~\ref{algo:compat}.
\item For $k=1,\ldots, K$, repeat move $\text{MC}_1$ 
and when completed, do move $\text{MC}_2$.
\begin{compactenum}[$\text{MC}_1$.]
\item Draw 
$(W_{\ell,1}',\ldots,W_{\ell,n+1}')=(W_{\ell,1}',\ldots,W_{\ell,n}',1-\sum_{j=1}
^nW_{\ell,j}')$, for $\ell = 1, 2$, from $\operatorname{Dirichlet}(1,\ldots,1)$, and independently draw $\theta'\sim \pi$, and evaluate
\begin{equation*}
 \label{eq:pk}
p_k(S')=\frac{1}{n!}\prod_{i=1}^n c_{\theta'}\Big(\sum_{j=1}^i
W_{1,j}',\sum_{j=1}^{S'(i)} W_{2,j}'\Big).
\end{equation*}
\item Compute $\hat{p}(S')=\frac{1}{K}\sum_{k=1}^Kp_{k}(S').$
\end{compactenum} 
\item Set $S_{t+1}=S'$ with probability
\[
 1\wedge \exp\left\{\frac{\hat{p}(S')-\hat{p}(S_t)}{T_t}\right\},
\]
where $T_t=1/\log t$ is the temperature. Otherwise, $S_{t+1}=S_t$.
\end{compactenum}
\end{algo}

As an illustration, we consider the toy example of 
Figure~\ref{fig:preddist}, with incomplete permutation $(-,2,-,1,3,-,-)$, 
the \FGM\ model and the Jeffreys' prior on the copula parameter. In Figure~\ref{fig:SA}, the excursion eventually oscillates between the two modes $s=(1,4,2,3,5,6,7)$ and 
$s^{-1}=(1,3,4,2,5,6,7)$, found at the beginning of this section, and $d_{\tau}(s,s^{-1})=2$.

\begin{figure}
\begin{center}
\includegraphics[width=0.4\textwidth]{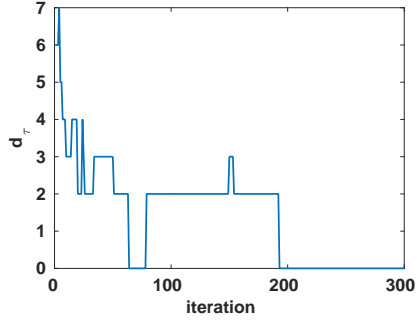}
\end{center}
\caption{Excursion $S_1, S_2, \ldots$ from the simulated annealing Alorithm~\ref{algo:modeapprox} excursion. Here, as in Figure~\ref{fig:preddist}, the incomplete ranking is $n = 7$, ${\cal M}^* = \{2, 4, 5\}$ and $s^* = (2, 1, 3)$, that is, $(-,2,-,1,3,-,-)$. The illustration shows the Kendall distance, $d_\tau$, of the visited permutations from the mode plotted against the iteration number. The \FGM\ copula parameter $\theta$ is distributed according to Jeffreys' prior $\pi_J$.}
\label{fig:SA}
\end{figure}

\subsection{Approximating the predictive distribution}
\label{S:Algo2}

It can be interesting for a recommendation system developper to explore more than a single ranking in order to make the recommendation to the individual. One reason may be to prevent local maxima situations inherent to simulated annealing algorithms. We therefore propose a second, more involved algorithm to obtain the few most likely rankings according to the predictive distribution 
%
$p(s)$, $s\in {\cal C}(s^*, {\cal M}^*)$. The idea is to construct a Markov chain with limiting (stationary) distribution equal to the conditional distribution of $(S, W_1, W_2, \theta)$ given $\{S \in \mathcal{C}(s^*, {\cal M}^*)\}$. In line with Theorem~\ref{thm:RL}, the random vector $(S, W_1, W_2, \theta)$ has joint density given by
\begin{equation}
\label{eq:joint}
  f(s,w_1,w_2,\theta)
  = n! \, \prod_{i=1}^n c_{\theta}
  \Bigl( \sum_{j=1}^i w_{1,j}, \sum_{j=1}^{s(i)} w_{2,j}\Bigr) \, \pi(\theta), 
  \qquad s \in \Sgrp_n, (w_1,w_2)\in \Delta^2, \theta \in \Theta,
\end{equation}
with respect to $\nu \times \lambda_1 \times \lambda_2$, where $\nu$ is the counting measure, and $\lambda_1$ and $\lambda_2$ are the Lebesgue measures on $\Delta^2$ and $\Theta$, respectively, with $\Delta = \{ w \in (0,1)^n : w_1 + \cdots + w_n < 1 \}$.

Essentially, $S_1,S_2,\ldots$ travels through the space ${\cal C}(s^*, {\cal M}^*)$, with limiting relative frequencies of occupancy converging to the predictive probabilities $p(s)$ in~\eqref{eq:preddist}. The move $\Move_2$ below concerns the proposal of the variables $(W_1,W_2)$ given $S$ and $\theta$. We have looked at two ways to do so, resulting in two variations of the algorithm: the proposal is drawn either independently of the current value of $(W_1,W_2)$ or from the instrumental density in equation~\eqref{eq:instr} below. The two variations will be called (\textsc{mhi}) and (\textsc{mhrw}) in Move $\Move_{2}$ in Algorithm~\ref{algo:predapprox}. Move $\Move_{3}$ of the algorithm requires a draw of a $\theta' \in \Theta$ according to some instrumental density $q( \theta' \mid \theta)$, given the previous value $\theta \in \Theta$; for the \FGM\ family, this is the density of the uniform distribution on the interval $[\theta - \eps, \theta + \eps] \cap [-1, 1]$, for some tuning parameter $\eps > 0$.

\begin{algo}
\label{algo:predapprox}
Let $(S,W_1,W_2,\theta)$ be the initial state of the chain, with $S \in 
{\cal C}(s^*, {\cal M}^*)$, $(W_1,W_2)\in 
\Delta^2$, and $\theta \in \Theta$. Let $f$ be the density given 
in~\eqref{eq:joint}. 
\begin{compactenum}
\item
At each iteration $t=1,\ldots,N$, select a 
move at random (equiprobably) between moves $\Move_{1}$, $\Move_{2}$, and 
$\Move_{3}$.
\begin{compactenum}[${\Move}_1$ --]
\item Draw $S'$ according to Algorithm~\ref{algo:compat} and replace $S$ by 
$S'$ with probability
\[
 1\wedge \frac{f(S',W_1,W_2,\theta)}{f(S,W_1,W_2,\theta)}.
\]
\item 
\begin{compactitem}
\item[(\textsc{mhi})] 
Draw $W_{\ell}' = (W_{\ell,1}',\ldots,W_{\ell,n+1}') \sim \operatorname{Dirichlet}(1,\ldots,1)$, for $\ell=1,2$, and
and replace $(W_1,W_2)$ by $(W_1',W_2')$ with probability
\begin{equation}
\label{eq:accept}
  1 \wedge \frac{f(S,W_1',W_2',\theta)}{f(S,W_1,W_2,\theta)}.
\end{equation}
\item[(\textsc{mhrw})] 
Choose $\ell \in \{1, 2\}$ at random
and draw $W_{\ell}' = (W_{\ell,1}',\ldots,W_{\ell,n}')$ according to the distribution with density given by~\eqref{eq:instr} with $w_0=W_{\ell}$. Put 
$W_j'=(W_{j,1}',\ldots,W_{j,n}')=(W_{j,1},\ldots,W_{j,n})$ for $j\in 
\{1,2\}\setminus \{\ell\}$, and replace 
$(W_1,W_2)$ by $(W_1',W_2')$ with probability
\[
  1 \wedge 
  \frac%
    {f(S,W_1',W_2',\theta) \, q_{W_{\ell}} (W_{\ell}  \mid W_{\ell}')}%
    {f(S,W_1, W_2, \theta) \, q_{W_{\ell}'}(W_{\ell}' \mid W_{\ell} )}.
\]
\end{compactitem}
\item Draw $\theta'$ according to some density $q(\theta' \mid \theta)$; we use $\theta'\mid \theta \sim U(-1 \vee \{\theta-\eps\}, 1 \wedge \{\theta+\eps\})$. Replace $\theta$ by $\theta'$ with probability
\[
  1 \wedge \frac{f(S,W_1,W_2,\theta') \, q(\theta \mid \theta')}{f(S,W_1,W_2,\theta) \, q(\theta' \mid \theta)}.
\]
\end{compactenum}
The current state of the chain then becomes the (possibly unchanged) state, denoted for simplicity by $(S,W_1,W_2,\theta)$; set $S_t=S$.
\item 
Let $\mathcal{O}(s^*, {\cal M}^*) = \{ S_1, \ldots, S_N \}$ be the set of all the (distinct) values taken by $S_1,\ldots,S_N$. For each $s\in \mathcal{O}(s^*,{\cal M}^*)$, compute the relative frequency of $s$: 
\[
  \hat{p}(s) = \frac{1}{N} \sum_{t=1}^N \indic(S_t = s).
\]
\end{compactenum} 
\end{algo}

\begin{lemma}
\label{thm:instrumental}
Let $w_0=(w_{01},\ldots,w_{0n})\in \Delta$. If $W' \sim \operatorname{Dirichlet}(1,\ldots,1)$ and if $\Lambda$ is an independent random variable on $(0, 1)$ with density $g$, then $W=(1-\Lambda)w_0+\Lambda W'$ has density
\begin{equation}
\label{eq:instr}
  q_W(w\mid w_0)
  =
  n! \, \indic_{\Delta}(w) 
  \int_{1-\delta(w;w_0)}^1 \lambda^{-n} \, g(\lambda) \, \diff\lambda,
\end{equation}
where $\delta(w;w_0)=\min\{\frac{w_i}{w_{0i}} : 1 \leq i \leq n\} \wedge \{ (1 - \sum_{i=1}^n w_i) / (1 - \sum_{i=1}^n w_{0i}) \}$.
\end{lemma}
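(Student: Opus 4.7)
\medskip

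\noindent\textbf{Proof plan for Lemma~\ref{thm:instrumental}.}

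The plan is to condition on $\Lambda$, exploit the fact that $\operatorname{Dirichlet}(1,\ldots,1)$ is the uniform distribution on the open simplex $\Delta$ with constant density $n!$, and then integrate $\lambda$ out against $g$.

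First I would write the conditional distribution of $W$ given $\Lambda = \lambda$. By construction $W = (1-\lambda) w_0 + \lambda W'$ is an affine function of $W'$ with linear part $\lambda I_n$, and $W'$ has density $n! \, \indic_\Delta$. The change-of-variables formula therefore gives, for each fixed $\lambda \in (0,1)$,
\[
  q_{W\mid \Lambda}(w\mid \lambda, w_0)
  = \frac{n!}{\lambda^n}\,\indic\bigl\{w\in (1-\lambda) w_0 + \lambda\,\Delta\bigr\}.
\]
Marginalising over $\Lambda$ then yields
\[
  q_W(w\mid w_0)
  = n! \int_0^1 \lambda^{-n}\,\indic\bigl\{w\in (1-\lambda) w_0 + \lambda\,\Delta\bigr\}\,g(\lambda)\,\diff\lambda,
\]
so the remaining task is to convert the membership indicator into a range for $\lambda$.

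The key computation is to solve the constraint $w' := (w-(1-\lambda)w_0)/\lambda \in \Delta$ for $\lambda$. The coordinate-wise positivity conditions $w_i' > 0$ become $\lambda > 1 - w_i/w_{0i}$ for $i=1,\ldots,n$, while the simplex-sum condition $\sum_i w_i' < 1$ simplifies to
\[
  \lambda > 1 - \frac{1 - \sum_{i=1}^n w_i}{1 - \sum_{i=1}^n w_{0i}}.
\]
Taking the maximum of the right-hand sides and using the definition of $\delta(w;w_0)$, the conjunction of all these inequalities becomes the single condition $\lambda > 1 - \delta(w;w_0)$. Plugging this lower limit back into the integral gives exactly the formula claimed in~\eqref{eq:instr}; the factor $\indic_\Delta(w)$ appears because $w \in \Delta$ is necessary and sufficient for $\delta(w;w_0) > 0$, i.e.\ for the interval of admissible $\lambda$ to be nonempty.

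I do not expect any real obstacle: the only nontrivial bit is the clean bookkeeping that turns the $n+1$ linear inequalities in $\lambda$ into the single threshold $1-\delta(w;w_0)$, and this is immediate once each inequality is solved for $\lambda$ and the minimum $\delta$ is identified with the tightest binding constraint. A brief sanity check that $q_W(\,\cdot\mid w_0)$ integrates to one over $\Delta$—by Fubini, swapping the $w$ and $\lambda$ integrals recovers $\int_0^1 g(\lambda)\,\diff\lambda = 1$—confirms the derivation.
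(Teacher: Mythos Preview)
Your proposal is correct and follows essentially the same route as the paper's own proof: write the joint density of $(W,\Lambda)$ via the affine change of variables (yielding the $n!\,\lambda^{-n}$ factor and the indicator $\indic_\Delta\bigl((w-(1-\lambda)w_0)/\lambda\bigr)$), then rewrite that indicator as $\indic_\Delta(w)\,\indic_{(1-\delta(w;w_0),1)}(\lambda)$ and integrate out $\lambda$. If anything, you spell out the conversion of the $n+1$ linear inequalities into the single threshold more explicitly than the paper does.
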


Let us compare the two variations, (\textsc{mhi}) and (\textsc{mhrw}), of Algorithm~\ref{algo:predapprox} and use the \FGM\ family as a validating benchmark to assess their accuracy in approximating the entire predictive distribution. We look at the
total variation distance from the true predictive distribution, $p$, to the approximation $\hat{p}_t$,
as a function of the iteration $t$. Here, the total variation distance is defined as
\[
 \operatorname{TV}(\hat{p}_t,p)=\frac{1}{2}\sum_{s\in 
{\cal C}(s^*, {\cal M}^*)} \lvert \hat{p}_t(s)-p(s) \rvert.
\]
We have again taken the same situation as Figure~\ref{fig:preddist} with Jeffreys' prior and we have considered the six incomplete rankings with $n=7$ and ${\cal 
M}^*=\{2,4,5\}$, one such ranking for every $s^*\in \Sgrp_3$. 
Figure~\ref{fig:TV} shows the results for $s^* = (1,2,3)$ and $s^* = (3,2,1)$. The results for the other four permutations in $\Sgrp_2$ are similar.

\begin{figure}
\begin{center}
\begin{tabular}{cc}
{\includegraphics[width=0.4\textwidth]{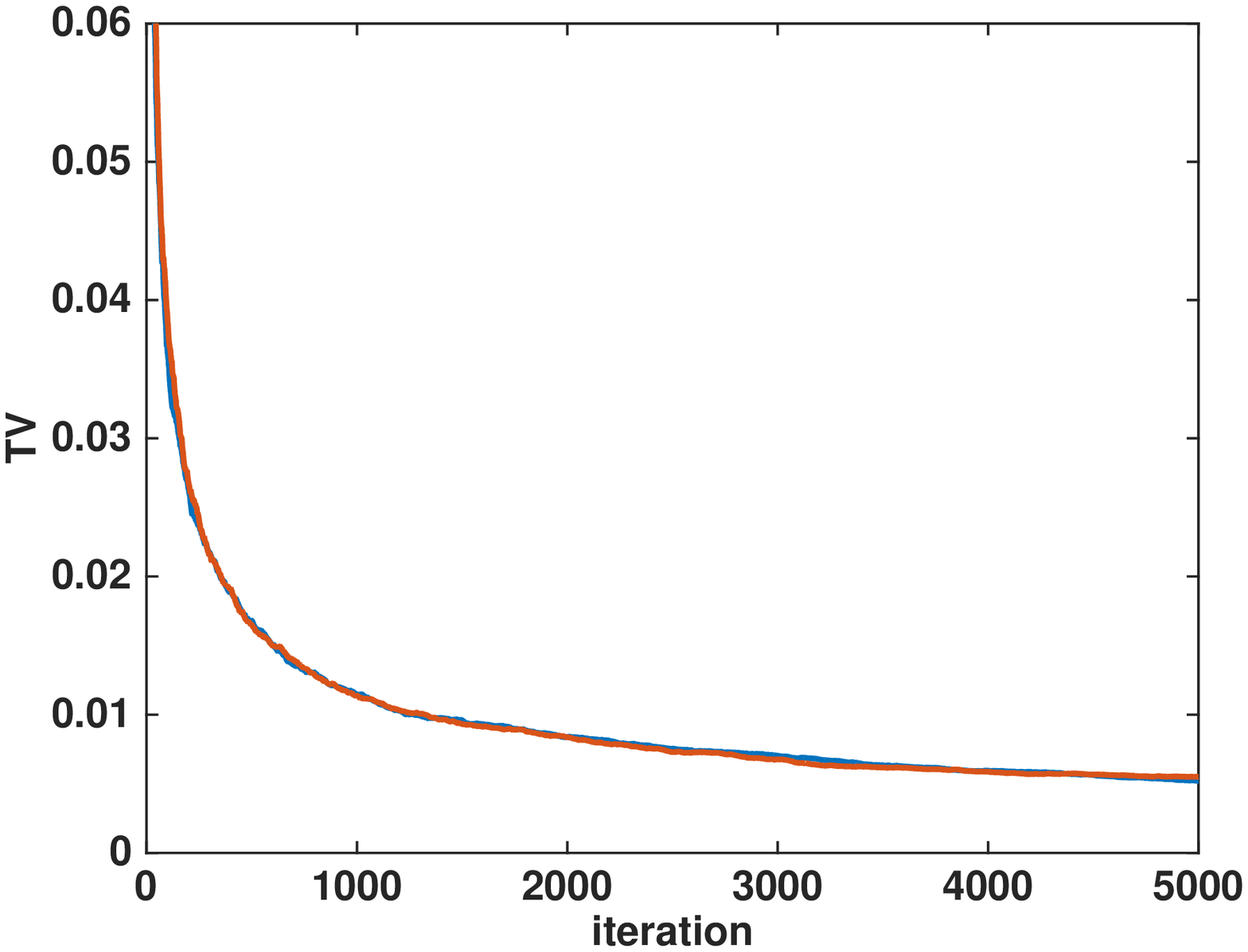}}&
{\includegraphics[width=0.4\textwidth]{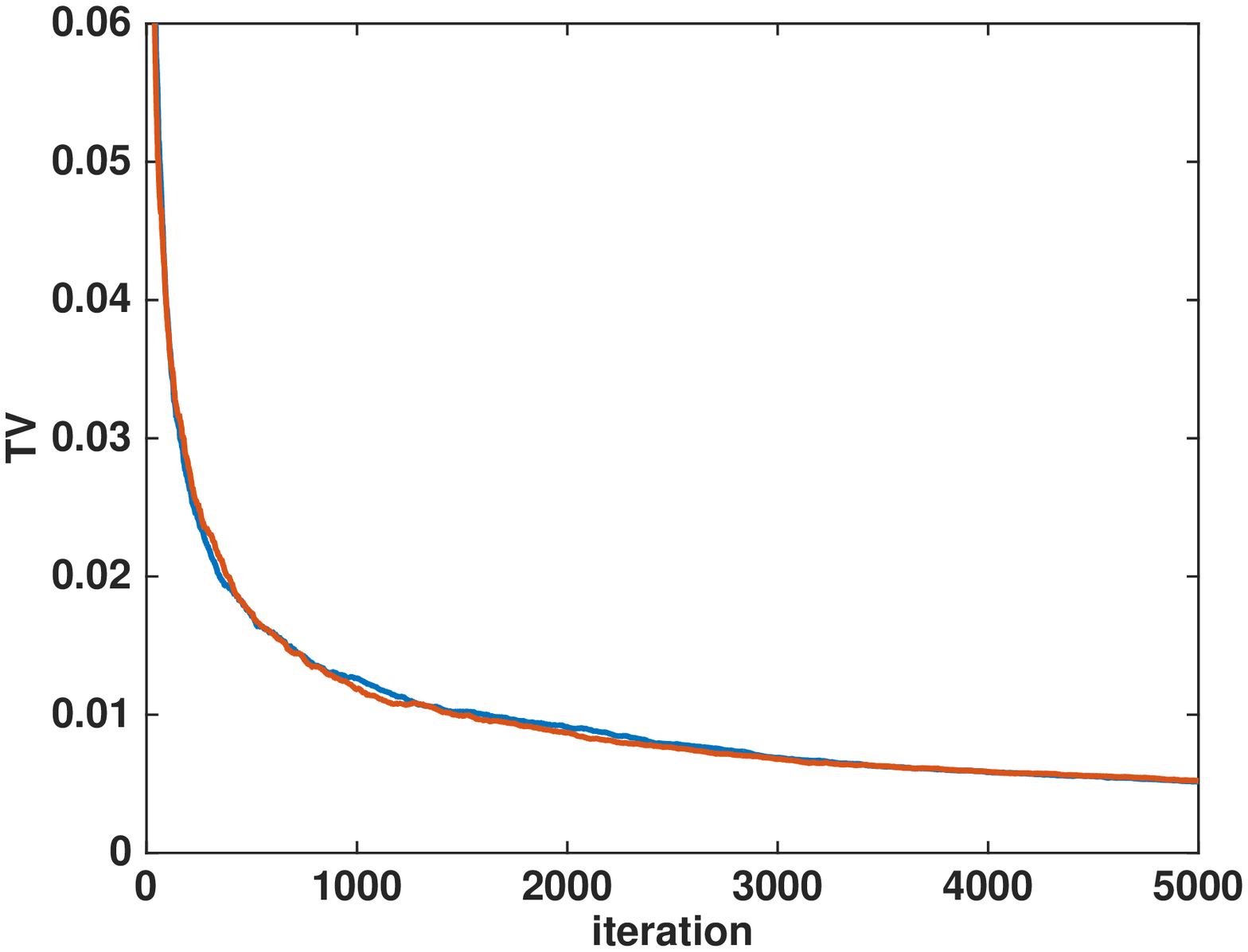}}\\
(a) $s^*=(1,2,3)$&
(f) $s^*=(3,2,1)$
\end{tabular}
\end{center}
\caption{Total variation distance of the true predicted distribution to the approximation using  Algorithm~\ref{algo:predapprox} (\textsc{mhi}) in blue and (\textsc{mhrw}) in orange, using Jeffreys' prior on the \FGM\ copula parameter $\theta$. The incomplete rankings are given by $n=7$, ${\cal M}^*=\{2,4,5\}$ and $s^* = (1, 2, 3)$ (left) and $s^* = (3,2, 1)$ (right). The values are plotted against the iteration number (in thousands).%
}
\label{fig:TV}
\end{figure}


Although roughly noticeable, there is and should be a certain symmetry in 
the results between Figure~\ref{fig:TV} (a) 
and (b).
This is explained by the equality of the events 
\[
  \{ S \in {\cal C}(a\circ s^*,{\cal M}^* )\}
  =
  \{ a \circ S \in {\cal C}(s^*, {\cal M}^*) \},
\] 
where, by a little abuse of notation, $a$ stands for the anti-identity in $\Sgrp_m$ on the left-hand side and for the anti-identity in $\Sgrp_n$ on the right-hand side. It follows that for the \FGM\ copula family and a symmetrical prior like Jeffreys' prior, since $\prob ( a \circ S = s ) = \prob ( S = s )$ for all $s \in \Sgrp_n$, we obtain
\begin{align*}
  \prob \{ S = s \mid S \in {\cal C}(a\circ s^*,{\cal M^*}) \}
  &=
  \prob \{ S = s \mid a \circ S \in {\cal C}(s^*, {\cal M}^*) \} \\
  &= 
  \prob \{ S = a \circ s \mid S \in {\cal C}(s^*, {\cal M}^*) \}
  .
\end{align*}

\section{Comparisons with other recommender systems}
\label{S:Data}

We compare our method in Algorithm~\ref{algo:modeapprox} with those available from the \textit{Personalized Recommendation Algorithms} (PREA) java software, see~\cite*{LSL12}, which contains some state-of-the-art techniques. Our competitors are the algorithms \textit{SlopeOne}, introduced by~\cite{LM05}, \textit{Non-negative Matrix Factorization} (NMF), see for instance~\cite{LS99}, \textit{Probabilistic Matrix Factorization} (PMF), see~\cite{SM07}, \textit{Bayesian Probabilistic Matrix Factorization} (BPMF), see~\cite{SM08}, \textit{Regularized Singular Value Decomposition} (RegSVD), see~\cite{Paterek07}, and \textit{Fast Non-negative Principal Component Analysis} (NPCA), see~\cite*{Yu09}. We refer to our method as \textit{Bayesian Bivariate Ranks} (BBR).

We consider the well known \textit{MovieLens 100k} (https://grouplens.org/datasets/movielens/100k/) dataset, comprising 1664 users, 943 movies, with a total of 99392 ratings, from which we select a subset of users and movies with many ratings to act as all the data. More precisely, we consider a matrix of 100 users and 35 movies, with 2687 available ratings, from one to 5 stars. In the selected matrix, the movies are ranked in order from the highest rated movie (by averaging the user ratings of each movie in the entire \textit{MovieLens 100k} dataset) to the lowest rated one. Note that this overall ranking is important only for our method and acts as the expert opinion. In view of the notation of Section~\ref{S:RL}, this ordering implies that $r_x=e$ and $r_y=s$ for every user, although the movies ranked for one user may differ from that of another user. 

The user ratings have ties. While this does not cause problems for the overall ordering of the movies using the entire \textit{MovieLens 100k} dataset, it does require a choice for obtaining an individual user's permutation $r_y=s$. We break the ties using the expert opinion, in that if a user has given identical ratings to two or movies, the overall ordering determines their mutual ranks.

From the ratings data, we keep (at random) a certain proportion $p$ of the data, and this we do for each user. We have considered the proportions 5\%, 10\%, 15\%, 20\%, 25\%, 50\% and 75\%, see the graphs below. For a given proportion $p$ of ratings kept, the matrix thus obtained becomes the data which we shall use to predict the rankings of the users. This is done, in turn, for each of the 100 users. Apart from the prior specification, see below, our methodology uses only the current user's retained rankings, obtained as discussed above. In contrast, the other methodologies use the entire matrix of retained ratings (not rankings), for each user. We then repeat all of this 30 times. There are many metrics used to evaluate recommender systems, see \cite{GS09} and~\cite{LSL12}, among which the Kendall distance $d_\tau$ in~\eqref{eq:dtau}. Since we are interested on the predicted rankings and not ratings, we shall focus on this distance for evaluating the methodologies. Note that the methodologies considered here do not give ties very often, especially when $p$ is large, and so the predictions, either of ratings (the competitors) or rankings (us) are all (or can be transformed into) genuine permutations. In the few encountered events where there were ties in the predicted ratings (for small $p$), again we broke the ties using the expert opinion, to obtain permutations.

We have used the Gaussian copula family with parameter $-1<\rho<1$ as model for the dependence between the expert and user ratings. For the prior on $\rho$, we consider the one-to-one relation $\rho= \sin(\tau \pi/2)$, where $\tau$ is Kendall's $\tau$. We have obtained the empirical distribution of $\tau$ using the complete MovieLens data set accross all users and have applied the above transformation to obtain an estimate of the distribution of $\rho$. We finally fit the density of $2T-1$, where $T\sim \operatorname{Beta}(\alpha,\beta)$, to this estimate which gave the approximate values $\alpha=6$ and $\beta=2$. This is the prior for $\rho$.

For a fixed proportion $p$ of available data in the matrix, and for each user $u$, let $s_u$ be his or her (true) movie rankings and let $\hat{s}_{u,k,p}$ be the prediction based on the kept data, and this for repetition $k=1,\ldots,30$. The plots in Figure~\ref{fig:meandtauk} show boxplots accross the repetitions $k$ of
\begin{equation}
\label{eq:meandtauk}
d(p,k)=\frac{1}{100}\sum_{u=1}^{100}d_{\tau}(\hat{s}_{u,k,p},s_u)
\end{equation}
for all the methods considered and for various choices of $p$, the proportion of data kept. To give a global idea of the performance of each method, Figure~\ref{fig:meandtau} shows the values of
\begin{equation}
\label{eq:meandtau}
\bar{d}(p)=\frac{1}{30}\sum_{k=1}^{30} d(p,k)
\end{equation}
as a function of $p$.

While the method that we propose seems to do better than the other methods when the information provided by the users is limited, it is also less variable than most of the other methods. In fact, the other methods that we have looked at depend only on the available users' data, whereas our method incorporates expert information. Our method could be of interest to a start-up company having little available data at first, until maybe switching too another method when the amount of data it has increases. 

\begin{figure}
\begin{center}
\subfigure[$p=0.05$]
{\includegraphics[width=7cm]{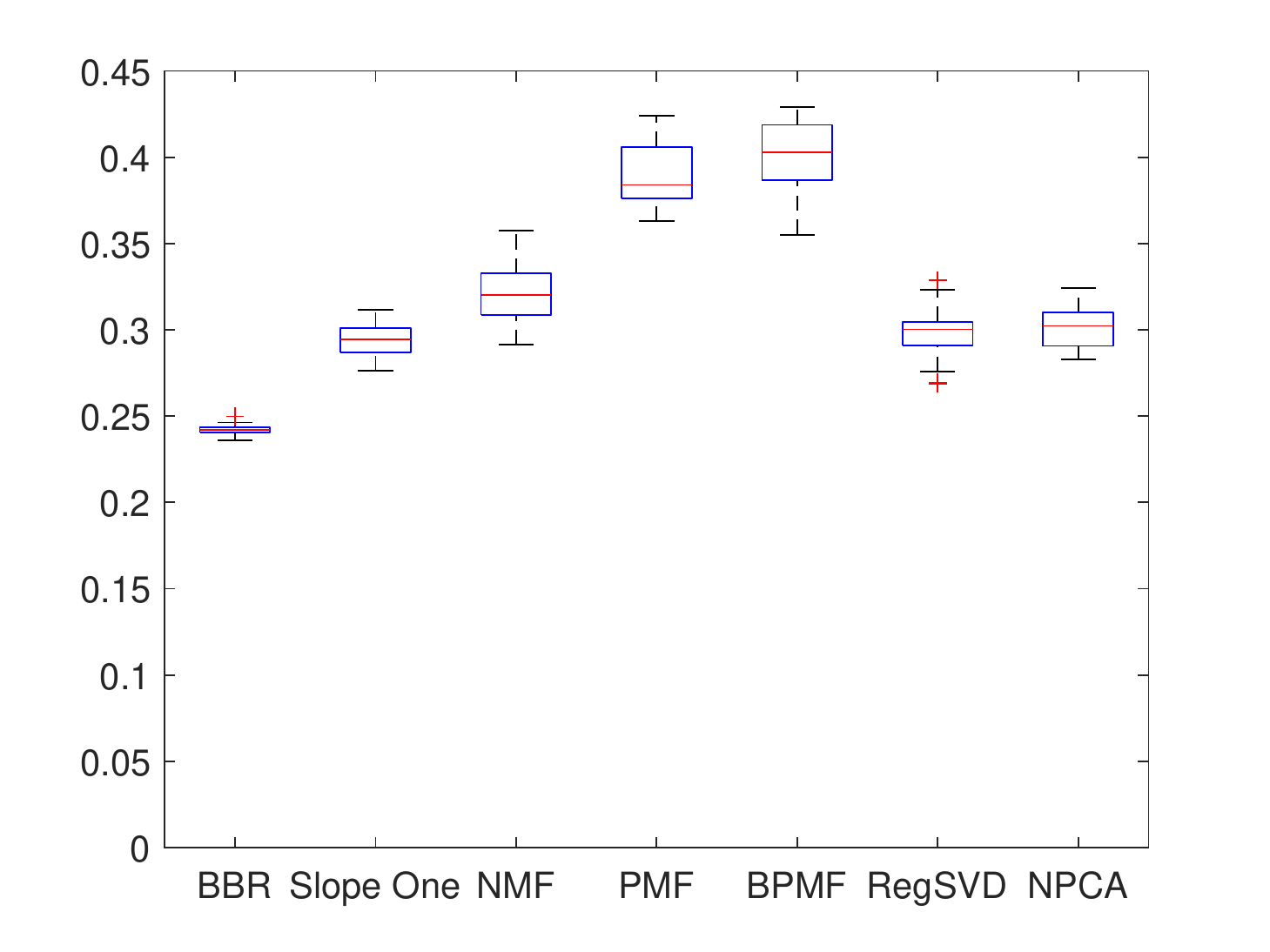}}
\subfigure[$p=0.15$]
{\includegraphics[width=7cm]{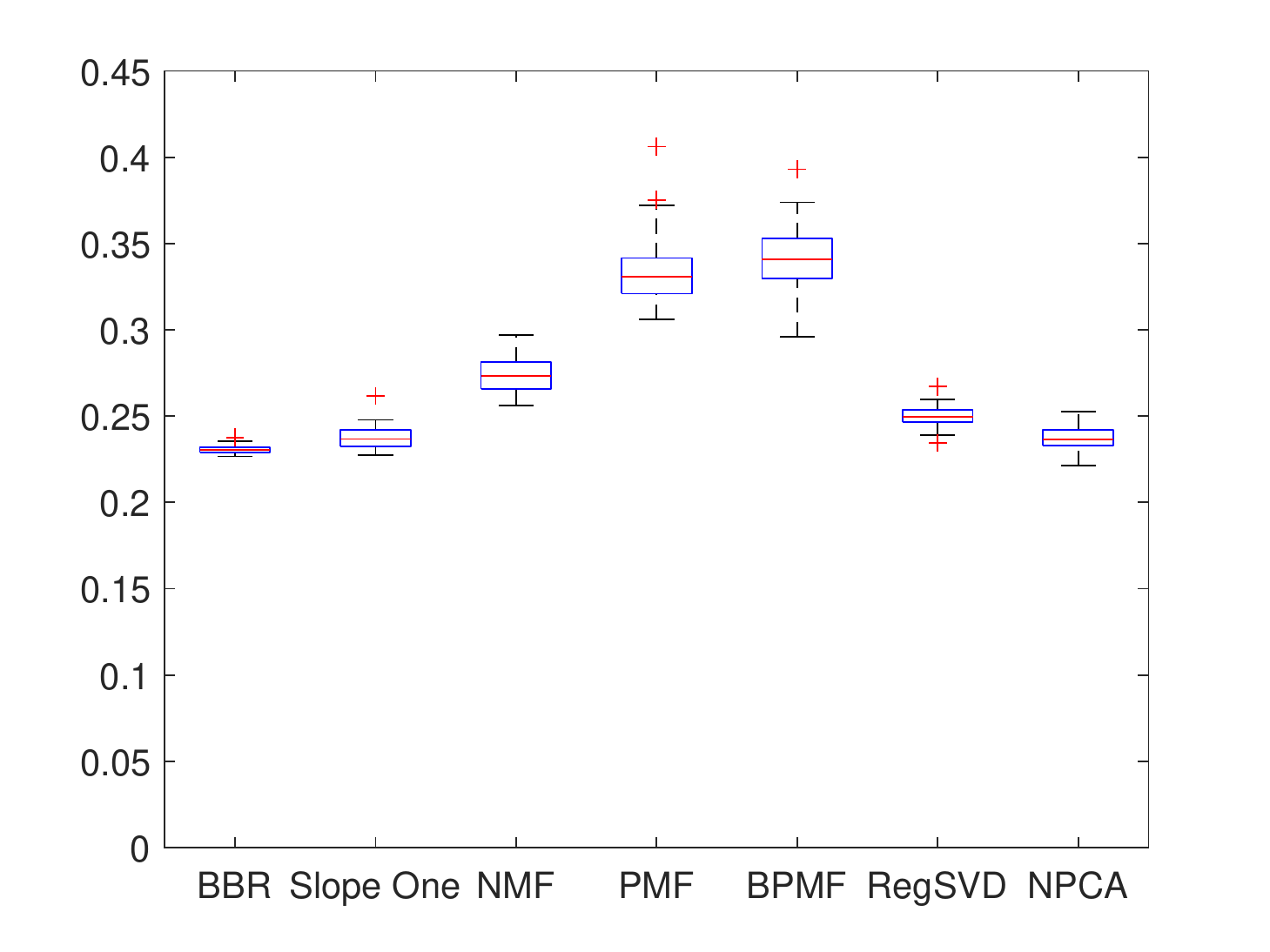}}
\subfigure[$p=0.2$]
{\includegraphics[width=7cm]{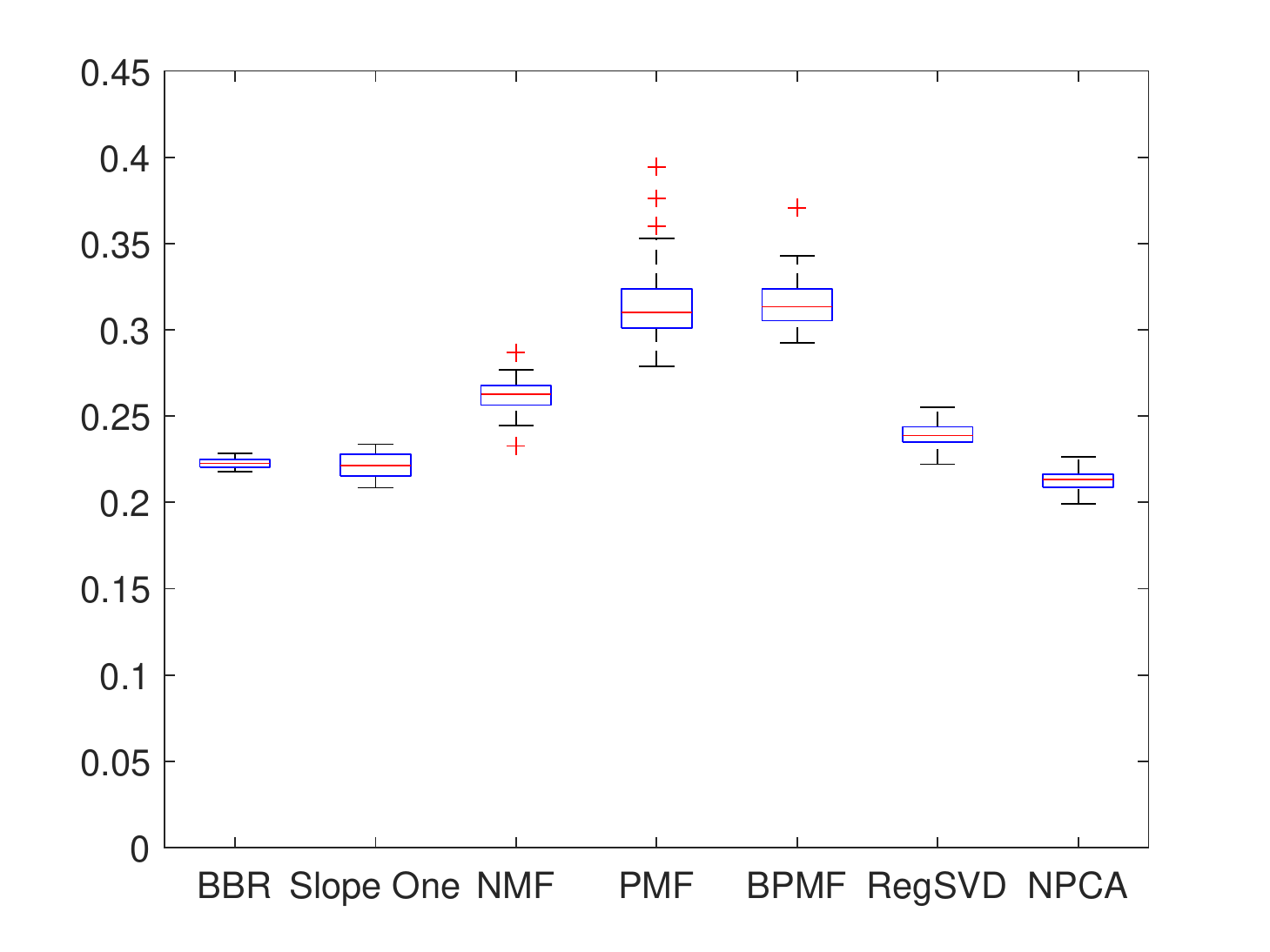}}
\subfigure[$p=0.25$]
{\includegraphics[width=7cm]{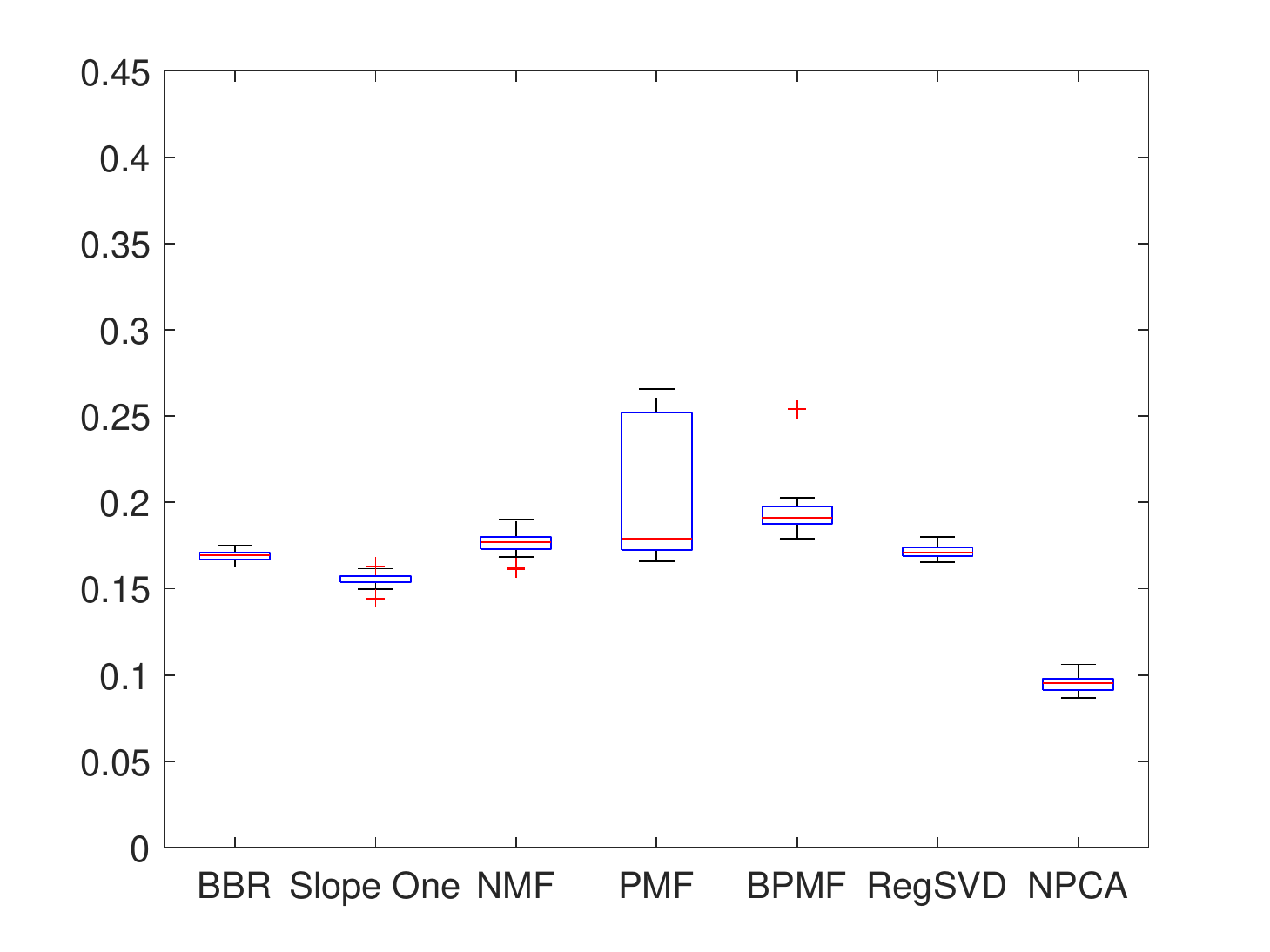}}
\subfigure[$p=0.75$]
{\includegraphics[width=7cm]{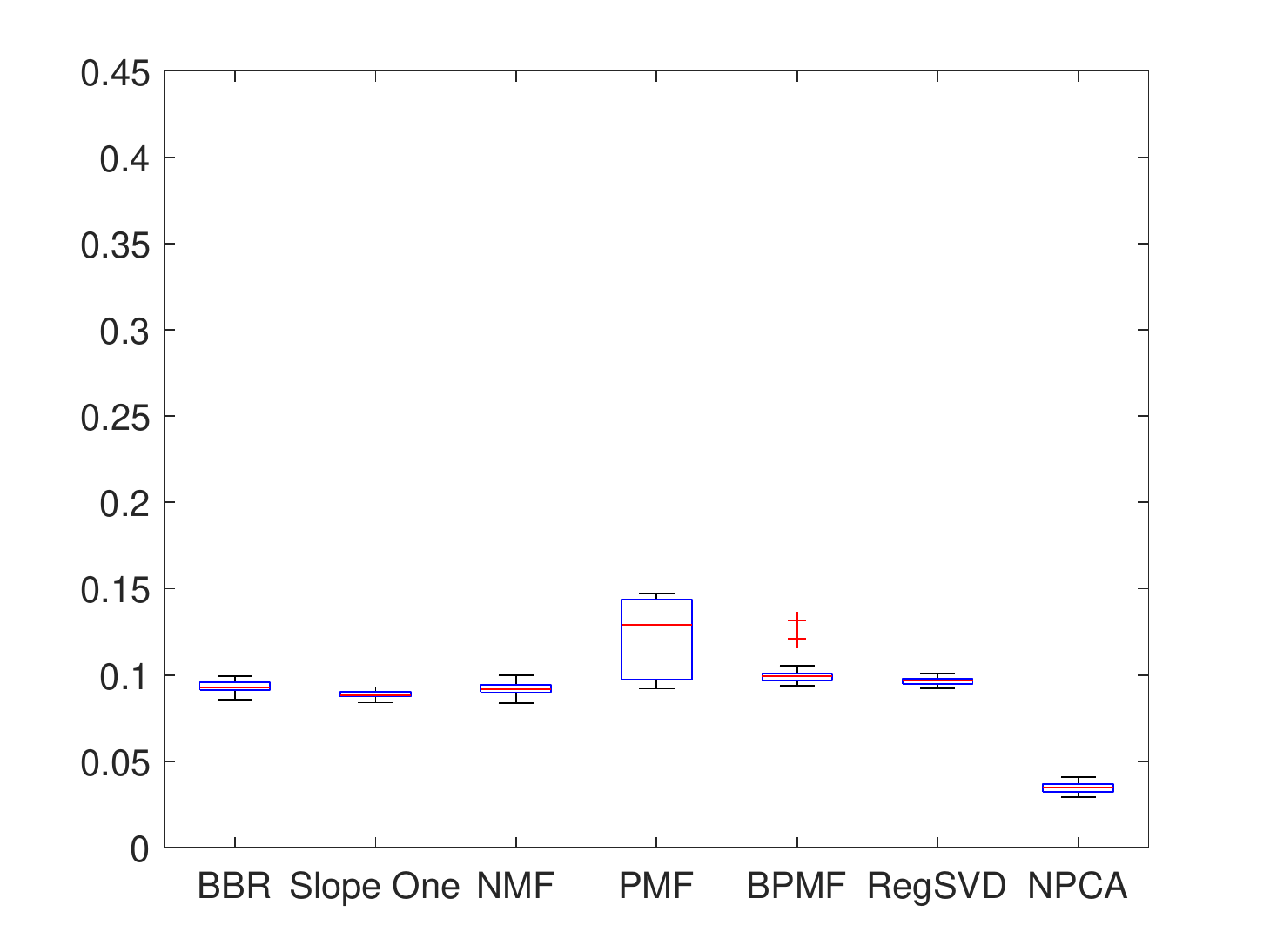}}
\end{center}
\caption{Boxplots of the $d(p,k)$ in~\eqref{eq:meandtauk} over $k=1,\dots,30$, for various values of $p$ and for various recommender systems. Our method is BBR.}
\label{fig:meandtauk}
\end{figure} 

\begin{figure}[htbp]
\centering
\includegraphics[width=10cm]{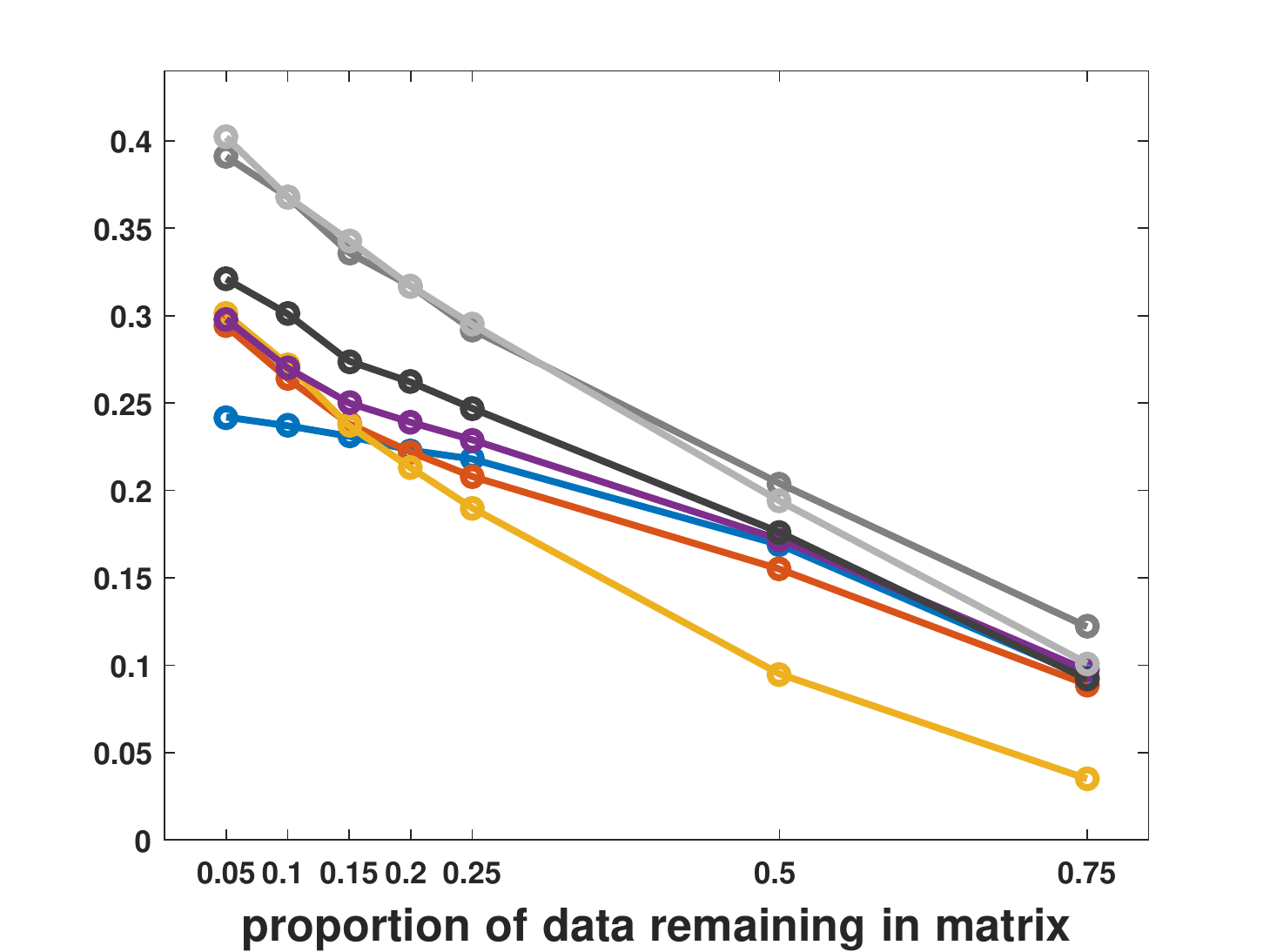}
\caption{Values of $d(p)$, given by \eqref{eq:meandtau}, for $p=5\%, 10\%, 15\%, 20\%, 25\%, 50\%$ and 75\%. Our method (BBR) is in blue, yellow is NPCA, orange is Slope One, purple is RegSVD, black is NMF, dark grey is PMF, and light grey is BPMF.}
\label{fig:meandtau}
\end{figure} 

Besides the methods mentioned, we have also tried collaborative filtering methods and local-low ranks matrix factorization methods provided by the PREA toolkit. These are useful for large-scale data but did not perform as well as the other methods considered in our experiments. 

Our method could also be used to predict the ranks of a user's top $n'$ movies more quickly. One way to do this is to run the chain on the entire set of $n$ movies with the $m$ partial (relative) rankings, therefore using all the information at hand, and then stop the chain. Consider only those $n'$ movies that have appeared most often during this first run. For those $n'$ movies, consider their relative rankings on $\Sgrp_{n'}$ obtained by the initial general (or expert) rankings. Consider also the relative $m'$ partial rankings on $\Sgrp_{m'}$ of the $m' \in \{0, 1, \ldots, m\}$ movies of the user's top $n'$ that belong to the list of the $m$ initial partial rankings. Finally, apply the algorithm to the resulting permutation, with $n$ replaced by $n'$ and $m$ by $m'$, to obtain the predicted ranking of the user's top $n'$ movies.

\section*{Acknowledgement}

J. Segers gratefully acknowledges financial support from the Projet d'Act\-ions de Re\-cher\-che Concert\'ees 
programme
of the Communaut\'e fran\c{c}aise de Belgique and from 
a
Interuniversity Attraction Pole research network grant 
of the Belgian federal government.

\appendix

\section{Proofs}


\begin{proof}[Proof of Lemma~\ref{lem:S}]
For $\sigma \in \Sgrp_n$, consider the random vectors $X \circ \sigma = (X_{\sigma(1)}, \ldots, X_{\sigma(n)})$ and $Y \circ \sigma = (Y_{\sigma(1)}, \ldots, Y_{\sigma(n)})$. The joint distribution of $(X \circ \sigma, Y \circ \sigma)$ is the same as the one of $(X, Y)$. By~\eqref{eq:rankComp}, we have $R_{X \circ \sigma} = R_X \circ \sigma$ and $R_{Y \circ \sigma} = R_Y \circ \sigma$ with probability one, i.e., in the absence of ties. Setting $\sigma = \tau^{-1} \circ r_x$ with $\tau \in \Sgrp_n$, we obtain
\begin{align*}
  \prob( R_X = r_x, \, R_Y = r_y )
  &= 
  \prob( R_X \circ \tau^{-1} \circ r_x = r_x, \, R_Y \circ \tau^{-1} \circ r_x = r_y ) \\
  &=
  \prob( R_X = \tau, \, R_Y \circ \tau^{-1} = r_y \circ r_x^{-1} ) \\
  &=
  \prob\{ R_X = \tau, \, S(X, Y) = r_y \circ r_x^{-1} \}.
\end{align*}
Summing over $\tau \in \Sgrp_n$, we find that
\begin{align*}
  \prob\{ S(X, Y) = r_y \circ r_x^{-1} \}
  =
  \sum_{\tau \in \Sgrp_n} \prob\{ R_X = \tau, \, S(X, Y) = r_y \circ r_x^{-1} \} 
  =
  n! \, \prob( R_X = r_x, \, R_Y = r_y ),
\end{align*}
since the cardinality of $\Sgrp_n$ is $n!$.
\end{proof}

\begin{proof}[Proof of Theorem~\ref{thm:RL}]
As explained before the theorem, we have $R_X = R_U$ and $R_Y = R_V$ almost surely, and thus $S(X, Y) = S(U, V)$ almost surely. Let $s \in \Sgrp_n$ and let 
\[
  E_{s} = \{ (u,v) \in (0,1)^{2n} : 
    u_1 < \ldots < u_n, \,
    v_{s^{-1}(1)} < \ldots < v_{s^{-1}(n)} \}.
\]
Since $(0, 1)^{2n} \setminus \mathbb{D}_n^2$ has Lebesgue measure zero, we find, by Lemma~\ref{lem:S} with $r_x = e$,
\begin{equation}
  \prob\{ S(X, Y) = s \}
  =
  n! \prob( R_U = e, \, R_V = s ) 
  =
  n! \int_{E_s} \prod_{i=1}^n c(u_i, v_i) \, \diff u_i \, \diff v_i.
\label{eq:RL:integral}
\end{equation}
For $(u, v) \in (0, 1)^{2n} \cap \mathbb{D}_n^2$, the vector $(u_{(1)}, \ldots, u_{(n)}; v_{(s(1))}, \ldots, v_{(s(n))})$ belongs to $E_s$; here, $(z_{(1)}, \ldots, z_{(n)})$ denotes the vector of ascending order statistics of the vector $(z_1, \ldots, z_n) \in \mathbb{D}_n$. We find
\begin{equation}
\label{eq:expectation} 
  \prob\{ S(X, Y) = s \}
  =
  \frac{1}{n!} \int_{(0,1)^{2n}} \prod_{i=1}^n c(u_{(i)},v_{(s(i))})\, \diff u_i \, \diff v_i
  =
  \frac{1}{n!} \expec\left\{ \prod_{i=1}^n c\bigl(\tilde{U}_{(i)}, \tilde{V}_{(s(i))}\bigr)\right\},
\end{equation}
where $\tilde{U}_1,\ldots,\tilde{U}_n, \tilde{V}_1,\ldots,\tilde{V}_n$ are iid random variables, uniformly distributed on $(0, 1)$. The result then follows from the fact 
that $\tilde{U}_{(i)} = \sum_{j=1}^i W_{1,j}$ and $\tilde{V}_{(s(i))} = \sum_{j=1}^{s(i)} W_{2,j}$, for $i=1,\ldots,n$, where 
$(W_{\ell,1},\ldots,W_{\ell,n+1})$, for $\ell=1,2$, is the vector of $n+1$ spacings on $(0,1)$ 
based on $\tilde{U}_1,\ldots,\tilde{U}_n$ and $\tilde{V}_1,\ldots,\tilde{V}_n$ for $\ell = 1$ and $\ell = 2$, respectively.
\end{proof}

\begin{proof}[Proof of Lemma~\ref{lem:cmpblt}]
Applying~\eqref{eq:rankComp} with $\sigma = (r_x^*)^{-1}$, we find
\begin{align*}
  r_y \in {\cal C}( r_y^*, {\cal M} )
  &\iff
  \rank( r_y(i_1), \ldots, r_y(i_m) ) = r_y^* \\
  &\iff
  \rank( r_y(i_1), \ldots, r_y(i_m) ) \circ (r_x^*)^{-1} = r_y^* \circ (r_x^*)^{-1} \\
  &\iff
  \rank( r_y(i_{(r_x^*)^{-1}(1)}), \ldots, r_y(i_{(r_x^*)^{-1}(m)}) = s^*.
\end{align*}
By definition, ${\cal M}^* = \{ i_1^*, \ldots, i_m^* \}$, where $i_1^* < \ldots < i_m^*$ are the order statistics of the vector $(r_x(i_1), \ldots, r_x(i_m))$. Since $\rank(r_x(i_1), \ldots, r_x(i_m)) = r_x^*$, we find
\[
  i_{(r_x^*)^{-1}(j)} = r_x^{-1}(i_j^*), \qquad j = 1, \ldots, m.
\]
We obtain that
\begin{align*}
  r_y \in {\cal C}( r_y^*, {\cal M} )
  &\iff
  \rank( r_y \circ r_x^{-1}(i_1^*), \ldots, r_y \circ r_x^{-1}(i_m^*) ) = s^* \\
  &\iff
  r_y \circ r_x^{-1} \in {\cal C}( s^*, {\cal M}^* ). 
\end{align*}
\end{proof}

\begin{proof}[Proof of Theorem~\ref{thm:FGM}]
By \eqref{eq:RL:integral}, we have
\[
 \prob_{\theta}(S=s)
 =
 n! \int_{E_s} \prod_{i=1}^n c_{\theta}(u_i,v_i)\, \diff u_i \, \diff v_i 
 =
 n! \int_{E_s} \prod_{i=1}^n \{1+\theta(2u_i-1)(2v_i-1)\} \, \diff u_i \, \diff v_i,
\]
where $E_s=\{(u,v)\in (0,1)^{2n}:u_1<\cdots<u_n, v_{s^{-1}(1)}<\cdots<v_{s^{-1}(n)}\}$. Let 
$  \Delta = \{w \in (0,1)^n : w_1 + \cdots + w_n < 1 \} $
be the standard $n$-dimensional simplex. Consider the following change of variables from $E_s$ to $\Delta^2$:
\begin{equation*}
w_{1i}=
\begin{cases}
u_1
&\text{for } i=1,\\
u_i-u_{i-1}
&\text{for } 2\leq i \leq n,
\end{cases}
\text{ and}\quad
w_{2i}=
\begin{cases}
v_{s^{-1}(1)}
&\text{for } i=1,\\
v_{s^{-1}(i)}-v_{s^{-1}(i-1)}
&\text{for } 2\leq i \leq n.
\end{cases}
\end{equation*}
If we put $w_{\ell,n+1}=1-\sum_{j=1}^nw_{\ell j}$, for $\ell=1,2$, then we can write 
\begin{align*}
  2u_i-1
  &=
  u_i-(1-u_i)
  =
  \sum_{k=1}^i w_{1k} - \sum_{k=i+1}^{n+1} w_{1k}
  =
  \sum_{k=1}^{n+1} (-1)^{\indic(k>i)} w_{1k}, \\
  2v_i-1
  &=
  v_i - (1-v_i)
  =
  \sum_{k=1}^{s(i)} w_{2k} -\sum_{k=s(i)+1}^{n+1} w_{2k}
  =
  \sum_{k=1}^{n+1} (-1)^{\indic(k>s(i))} w_{2k}.
\end{align*}
We obtain
\begin{eqnarray*}
\label{eq:pol}
  \prod_{i=1}^n c_{\theta}(u_i(w_1),v_i(w_2))
  &=&
  \prod_{i=1}^n 
  \left\{ 
    1 + \theta
    \left(\sum_{k=1}^{n+1} (-1)^{\indic(k>i)}w_{1k}\right)
    \left(\sum_{k=1}^{n+1} (-1)^{\indic(k>s(i))}w_{2k}\right) 
  \right\} \nonumber \\
  &=& 
  1+\sum_{j=1}^n \theta^j 
  \sum_{1\leq i_1<i_2<\cdots<i_j\leq n}
   d_j(i_1,\ldots,i_j;w_1) \, d_j(s(i_1),\ldots,s(i_j);w_2),
\end{eqnarray*}
where, for all $w \in \Delta$, 
\begin{eqnarray*}
d_j(i_1,\ldots,i_j;w) 
&=& 
\sum_{k_1=1}^{n+1} \cdots \sum_{k_j=1}^{n+1} (-1)^{\sum_{\ell=1}^j\indic(k_{\ell}>i_{\ell})} w_{k_1} \cdots w_{k_j}
\\
&=&
=
\sum_{k_1=1}^{n+1} \cdots \sum_{k_j=1}^{n+1} (-1)^{\sum_{\ell=1}^j\indic(k_{\ell}>i_{\ell})}w_1^{\sum_{\ell=1}^j \indic(k_\ell=1)} \cdots w_{n+1}^{\sum_{\ell=1}^j \indic(k_\ell=n+1)}. 
\end{eqnarray*}
The equality
\[
  \prob_{\theta}(S=s)
  =
  n! \int_{\Delta} \int_{\Delta} 
  \left\{ \prod_{i=1}^n c_{\theta}(u_i(w_1),v_i(w_2)) \right\}
  \, \diff w_1 \, \diff w_2,
\]
shows that $c_0(s)=1/n!$; indeed, $\int_{\Delta} \diff w = 1/n!$. For $j=1,\ldots,n$, the expression for the coefficient $c_j(s)$ is obtained via
\[
  d_j(i_1,\ldots,i_j)
  =
  \int_{\Delta} d_j(i_1,\ldots,i_j;w) \, \diff w
  =
 \sum_{k_1=1}^{n+1} \cdots \sum_{k_j=1}^{n+1} (-1)^{\sum_{\ell=1}^j\indic(k_{\ell}>i_{\ell})}\frac{\prod_{p=1}^{n+1}\left\{\sum_{\ell=1}^j\indic(k_{\ell}=p)\right\}!}{(n+j)!},
\]
for $\{ i_1, \ldots, i_j \} \subset \{1, \ldots, n\}$, an expression which is seen by recognizing the Dirichlet density normalizing constants. This gives~\eqref{eq:dj}, and~\eqref{eq:cj} follows. 

Finally, to see that $c_n(s) = 0$ for all $s \in \Sgrp_n$, note that
\[
  1
  =
  \sum_{s \in \Sgrp_n} \prob_{\theta}(S(U,V)=s)
  =
  \sum_{j=0}^{n}
  \left\{ \sum_{s \in \Sgrp_n} c_j(s) \right\} \theta^j
  = 
  1 + \sum_{j=1}^{n} \left\{ \sum_{s \in \Sgrp_n} c_j(s) \right\} \theta^j.
\]
It follows that $\sum_{s \in \Sgrp_n} c_j(s) = 0$ for $j = 1,\ldots,n$. Since $c_n(s) = n! \{d_n(1,\ldots,n)\}^2$ does not depend on $s$, we conclude that $c_n(s) = 0$, as required.
\end{proof}

\begin{proof}[Proof of Lemma~\ref{lem:symmetries}]
First, the FGM copula density $c_{\theta}$ is symmetric in its arguments and so 
\[
 \expec\left\{ \prod_{i=1}^n 
c_{\theta}\Big(U_{(i)},V_{(s^{-1}(i))}\Big)\right\}=\expec\left\{ 
\prod_{i=1}^n 
c_{\theta}\Big(U_{(s(i))},V_{(i)}\Big)\right\}
= \expec\left\{ 
\prod_{i=1}^n c_{\theta}\Big(U_{(i)},V_{(s(i))}\Big)\right\}.
\]
By using equation~\eqref{eq:expectation}, we find $\prob_{\theta}(S = s^{-1}) = \prob_{\theta}(S = s)$. 

Second, the FGM family also has the property that $c_{\theta}(u,v) = c_{-\theta}(1-u,v)$ for all $(u,v) \in [0,1]^2$. We find
\[
 \expec\left\{ \prod_{i=1}^n 
c_{\theta}\Big(U_{(i)},V_{(s(i))}\Big)\right\}=\expec\left\{ \prod_{i=1}^n 
c_{-\theta}\Big(U_{(i)},1-V_{(s(i))}\Big)\right\}= \expec\left\{ 
\prod_{i=1}^n c_{-\theta}\Big(U_{(i)},V_{(a\circ s(i))}\Big)\right\},
\]
and so, again by \eqref{eq:expectation}, $\prob_{\theta}(S=s)=\prob_{-\theta}(S=a\circ s)$.

These two results imply $\prob_{-\theta}(S=s\circ a) = \prob_{-\theta}( S = a \circ s^{-1} ) = \prob_\theta(S = s^{-1}) = \prob_\theta(S = s)$. Equation~\eqref{eq:symmetries} follows.
\end{proof}

\begin{proof}[Proof of Theorem~\ref{thm:mode}]
Recall the expression for $\prob_\theta(S = s) = \sum_{j=0}^{n-1} c_j(s) \, \theta^j$ for $s \in \Sgrp_n$ in Theorem~\ref{thm:FGM}. Since $\prob_{\theta}(S=s)=\prob_{-\theta}(S=a\circ s)$, for $\theta \in [-1,1]$, we have the relation
\begin{equation}
\label{eq:coeffs}
 c_j(a \circ s)=
\begin{cases}
c_j(s),
&\text{if $j$ is even},\\
-c_j(s), &\text{if $j$ is odd}.
\end{cases}
\end{equation}
For every permutation $s \in \Sgrp_n$ and every $j = 1, \ldots, n$, we have the identity $\{ \{i_1,\ldots,i_j\} : 1 \leq i_1 < i_2 < \ldots < i_j \leq n\} = \{ \{s(i_1),\ldots,s(i_j)\} : 1 \leq i_1 < i_2 < \ldots < i_j \leq n\}$. Moreover, the expression $d_j$ in \eqref{eq:dj} is symmetric in its arguments, that is,
$d_j(i_1,\ldots,i_j) = d_j(i_{\sigma(1)},\ldots,i_{\sigma(j)})$ for $\sigma \in 
\Sgrp_j$. It follows that
\[
  \sum_{1\leq i_1<i_2<\cdots<i_j\leq n} \{d_j(i_1,\ldots,i_j)\}^2 
  = 
  \sum_{1\leq i_1<i_2<\cdots<i_j\leq n} \{d_j(s(i_1),\ldots,s(i_j))\}^2.
\]
By the Cauchy--Schwarz inequality, $\lvert c_j(s) \rvert \leq c_j(e)$ for $j=0,\ldots,n-1$. Finally, by the triangle inequality,
\begin{eqnarray*}
 \prob(S=s)=\expec_{\pi}\{\prob_{\theta}(S=s)\}&\leq& 
\sum_{j=0}^{n-1} \lvert c_j(s)\expec_{\pi}(\theta^j) \rvert
\le
\sum_{k=0}^{\lfloor n/2
\rfloor-1} c_{2k+1}(e) \, \lvert\expec_{ 
\pi}(\theta^{2k+1})\rvert + \sum_{k=0}^{\lfloor(n-1)/2
\rfloor} c_{2k}(e)\expec_{\pi}(\theta^{2k})\\
&=&
\begin{cases}
\prob(S=e),
&\text{if all odd order moments are nonnegative},\\
\prob(S=a), &\text{if all odd order moments are nonpositive},
\end{cases}
\end{eqnarray*}
where the last equality follows from~\eqref{eq:coeffs}, with $s=e$.
\end{proof}

\begin{proof}[Proof of Corollary~\ref{cor:mode}]
By Theorem~\ref{thm:mode}, we need to look at the signs of the odd order moments. Let 
$1\leq k \leq n-1$ be an odd integer. We have
\[
  \expec_{\pi_{\alpha,\beta}}(\theta^k)
  =
  \frac{B(\beta,\beta)}{B(\alpha,\beta)}
  \expec[(2X-1)^k \{X^{\alpha-\beta}-(1/2)^{\alpha-\beta}\}], 
  \qquad X \sim \operatorname{Beta}(\beta,\beta).
\]
This is nonnegative if $\alpha \geq \beta$ and nonpositive if $\alpha \leq \beta$.
\end{proof}

\begin{proof}[Proof of Lemma~\ref{lem:compat}]
Let $S_0, S_1, S_2, \ldots$ be the Markov chain constructed by the algorithm. 

First, two states $s, r \in {\cal C}(s^*, {\cal M}^*)$ are equal if and only if $s(t)=r(t)$, for every 
$t\in {\cal N} \setminus \mathcal{M}^*$. If $s=r$, then 
$\prob(S_{2}=r\mid S_0=s)>0$. Otherwise, let 
${\cal N} \setminus \mathcal{M}^*=\{t_1,\ldots,t_{n-m}\}$, with $1\leq t_1<t_2<\cdots<t_{n-m} \leq 
n$. Now if $s(t_1) \neq r(t_1)$, 
then there are two possible cases; either $r(t_1)=s(t_k)$ for some 
$k\in\{2,\ldots,n-m\}$, and a call to move 
$\Move_{1}$ can generate 
$S_1$ with 
$S_1(t_1)=r(t_1)$. Or, in the second case, 
$r(t_1)=s(i_{\ell}^*)$ for some $\ell \in \{1,\ldots,m\}$, and then a call to 
move $\Move_{2}$ can generate $S_1$ with 
$S_1(t_1)=r(t_1)$. Continuing this way for $t_2,\ldots,t_{n-m}$, we see that 
$\prob(S_{k}=r\mid S_0=s)>0$ for some $k = 1, \ldots, n-m$, for every $s$ and $r$ in ${\cal C}(s^*, {\cal M}^*)$.

To show aperiodicity in the case where $1 < m < n$, take $s \in {\cal C}(s^*, {\cal M}^*)$, and let $i_{j_1}^*, i_{j_2}^* \in {\cal M}^*$ and $t_1 
\in {\cal N} \setminus {\cal M}^*$. Three successive calls 
to move $\Move_{2}$ can generate $S_1$, $S_2$, and $S_3$ with $S_1(t_1) = s(i_{j_1}^*)$, 
$S_2(t_1) = s(i_{j_2}^*)$, and 
$S_3(t_1) = s(t_1)$. Therefore $\prob(S_2 = s \mid S_0 = s) \wedge \prob(S_3 = s \mid S_0 = s) > 0$.

By irreducibility and aperiodicity, the above Markov chain has a unique stationary distribution on ${\cal C}(s^*, {\cal M}^*)$. By symmetry of the transition kernel, i.e., $\prob(S_1 = s \mid S_0 = r) = \prob(S_1 = r \mid S_0 = s)$ for all $s,r \in {\cal C}(s^*, {\cal M}^*)$, it follows that this stationary distribution must be the uniform one.
\end{proof}

\begin{proof}[Proof of Lemma~\ref{thm:instrumental}]
We have $q_W(w\mid w_0)=\int_0^1q_{W,\Lambda}(w,\lambda \mid w_0)\, \diff\lambda$, with 
\[
  q_{W,\Lambda}(w,\lambda\mid w_0)
  =
  n! \, \indic_{\Delta} \left( \frac{w-(1-\lambda)w_0}{\lambda} \right) \, \lambda^{-n} \, g(\lambda) \\
  =
  n! \, \indic_\Delta(w) \, \lambda^{-n} \, g(\lambda) \, \indic_{(1-\delta(w;w_0),1)}(\lambda),
\]
for $\lambda \in (0,1)$ and $w \in \Delta$.
\end{proof}

\bibliographystyle{biometrika}
\bibliography{biblio}%

\begin{thebibliography}{18}
\expandafter\ifx\csname natexlab\endcsname\relax\def\natexlab#1{#1}\fi

\bibitem[{Alvo \& Yu(2014)}]{Alvo14}
\textsc{Alvo, M.} \& \textsc{Yu, P. L.~H.} (2014).
\newblock \textit{Statistical methods for ranking data}.
\newblock Frontiers in Probability and the Statistical Sciences. Springer, New
  York.

\bibitem[{Diaconis(1988)}]{Diaconis88}
\textsc{Diaconis, P.} (1988).
\newblock \textit{Group representations in probability and statistics}.
\newblock Institute of Mathematical Statistics Lecture Notes---Monograph
  Series, 11. Hayward, CA: Institute of Mathematical Statistics.

\bibitem[{Feuerverger et~al.(2012)Feuerverger, He \& Khatri}]{feuerverger12}
\textsc{Feuerverger, A.}, \textsc{He, Y.} \& \textsc{Khatri, S.} (2012).
\newblock Statistical significance of the netflix challenge.
\newblock \textit{Statist. Sci.} \textbf{27}, 202--231.

\bibitem[{Fligner \& Verducci(1986)}]{FV86}
\textsc{Fligner, M.~A.} \& \textsc{Verducci, J.~S.} (1986).
\newblock Distance based ranking models.
\newblock \textit{J. Roy. Statist. Soc. Ser. B} \textbf{48}, 359--369.

\bibitem[{Gunawardana \& Shani(2009)}]{GS09}
\textsc{Gunawardana, A.} \& \textsc{Shani, G.} (2009).
\newblock A survey of accuracy evaluation metrics of recommendation tasks.
\newblock \textit{J. Mach. Learn. Res.} \textbf{10}, 2935--2962.

\bibitem[{Hoff(2007)}]{Hoff07}
\textsc{Hoff, P.~D.} (2007).
\newblock Extending the rank likelihood for semiparametric copula estimation.
\newblock \textit{Annals of Applied Statistics} \textbf{1}, 265--283.

\bibitem[{Hoff et~al.(2014)Hoff, Niu \& Wellner}]{Hoff14}
\textsc{Hoff, P.~D.}, \textsc{Niu, X.} \& \textsc{Wellner, J.~A.} (2014).
\newblock Information bounds for gaussian copulas.
\newblock \textit{Bernoulli} \textbf{20}, 604--622.

\bibitem[{Lee \& Seung(1999)}]{LS99}
\textsc{Lee, D.~D.} \& \textsc{Seung, H.~S.} (1999).
\newblock Learning the parts of objects by non-negative matrix factorization.
\newblock \textit{Nature} \textbf{401}, 788--791.

\bibitem[{Lee et~al.(2012)Lee, Sun \& Lebanon}]{LSL12}
\textsc{Lee, J.}, \textsc{Sun, M.} \& \textsc{Lebanon, G.} (2012).
\newblock P{REA}: personalized recommendation algorithms toolkit.
\newblock \textit{J. Mach. Learn. Res.} \textbf{13}, 2699--2703.

\bibitem[{Lemire \& Maclachlan(2007)}]{LM05}
\textsc{Lemire, D.} \& \textsc{Maclachlan, A.} (2007).
\newblock Slope one predictors for online rating-based collaborative filtering
  \textbf{5}.

\bibitem[{Nelsen(2006)}]{Nelsen06}
\textsc{Nelsen, R.~B.} (2006).
\newblock \textit{An introduction to copulas}.
\newblock Springer Series in Statistics. New York: Springer, 2nd ed.

\bibitem[{Paterek(2007)}]{Paterek07}
\textsc{Paterek, A.} (2007).
\newblock Improving regularized singular value decomposition for collaborative
  filtering .

\bibitem[{Salakhutdinov \& Mnih(2007)}]{SM07}
\textsc{Salakhutdinov, R.} \& \textsc{Mnih, A.} (2007).
\newblock Probabilistic matrix factorization.
\newblock In \textit{Proceedings of the 20th International Conference on Neural
  Information Processing Systems}, NIPS'07. USA: Curran Associates Inc.

\bibitem[{Salakhutdinov \& Mnih(2008)}]{SM08}
\textsc{Salakhutdinov, R.} \& \textsc{Mnih, A.} (2008).
\newblock Bayesian probabilistic matrix factorization using markov chain monte
  carlo.
\newblock In \textit{Proceedings of the 25th International Conference on
  Machine Learning}, ICML '08. New York, NY, USA: ACM.

\bibitem[{Segers et~al.(2014)Segers, van~den Akker \& Werker}]{Segers14}
\textsc{Segers, J.}, \textsc{van~den Akker, R.} \& \textsc{Werker, B. J.~M.}
  (2014).
\newblock Semiparametric gaussian copula models: Geometry and efficient
  rank-based estimation.
\newblock \textit{Ann. Statist.} \textbf{42}, 1911--1940.

\bibitem[{Sun et~al.(2012)Sun, Lebanon \& Kidwell}]{Lebanon12}
\textsc{Sun, M.}, \textsc{Lebanon, G.} \& \textsc{Kidwell, P.} (2012).
\newblock Estimating probabilities in recommendation systems.
\newblock \textit{J. R. Stat. Soc. Ser. C. Appl. Stat.} \textbf{61}, 471--492.

\bibitem[{Yu et~al.(2009)Yu, Zhu, Lafferty \& Gong}]{Yu09}
\textsc{Yu, K.}, \textsc{Zhu, S.}, \textsc{Lafferty, J.} \& \textsc{Gong, Y.}
  (2009).
\newblock Fast nonparametric matrix factorization for large-scale collaborative
  filtering.
\newblock In \textit{Proceedings of the 32Nd International ACM SIGIR Conference
  on Research and Development in Information Retrieval}, SIGIR '09. New York,
  NY, USA: ACM.

\bibitem[{Zhu(2014)}]{Zhu14}
\textsc{Zhu, M.} (2014).
\newblock Making personalized recommendations in e-commerce.
\newblock In \textit{Statistics in action}. CRC Press, Boca Raton, FL, pp.
  259--268.

\end{thebibliography}
\end{document}